\newcommand{\remove}[1]{}
\renewcommand{\cref}{\Cref}
\DeclareSymbolFont{AMSb}{U}{msb}{m}{n}
\DeclareMathSymbol{\N}{\mathbin}{AMSb}{"4E}
\DeclareMathSymbol{\Z}{\mathbin}{AMSb}{"5A}
\DeclareMathSymbol{\R}{\mathbin}{AMSb}{"52}
\DeclareMathSymbol{\Q}{\mathbin}{AMSb}{"51}
\DeclareMathSymbol{\erert}{\mathbin}{AMSb}{"50}
\DeclareMathSymbol{\I}{\mathbin}{AMSb}{"49}
\DeclareMathSymbol{\C}{\mathbin}{AMSb}{"43}
\newcommand{\Alg}{\AAA}
\newcommand{\AlgFindTukey}{\AAA_{\rm FindTukey}}
\newcommand{\AlgHalfSpace}{\AAA_{\rm LearnHalfSpace}}
\newcommand{\AlgRecConcave}{\AAA_{\rm RecConcave}}
\newcommand{\AAA}{\mathcal A}
\newcommand{\BBB}{\mathcal B}
\newcommand{\DDD}{\mathcal D}
\newcommand{\T}{\mathcal T}
\newcommand{\eps}{\varepsilon}
\newcommand{\error}{{\rm error}}
\newcommand{\db}{S}
\newcommand{\vspan}{\operatorname{\rm span}}
\newcommand{\poly}{\mathop{\rm poly}}
\newcommand{\hs}{\operatorname{\rm hs}}
\newcommand{\hp}{\operatorname{\rm hp}}
\newcommand{\td}{\operatorname{\rm td}}
\newcommand{\tdl}{\operatorname{\rm tdl}}
\newcommand{\halfspace}{\operatorname{\tt HALFSPACE}}
\newcommand{\set}[1]{\left\{ #1 \right\}}
\def\E{\operatorname*{\mathbb{E}}}
\def\1{\operatorname*{\mathbb{1}}}
\def\Q{\operatorname*{\mathbb{Q}}}
\def\poly{\mathop{\rm{poly}}\nolimits}
\newcommand{\pt}[1]{{\bf #1}}
\newtheorem{theorem}{Theorem}[section]
\newtheorem*{theorem*}{Theorem}
\newtheorem{lemma}[theorem]{Lemma}
\newtheorem*{lemma*}{Lemma}
\newtheorem{claim}[theorem]{Claim}
\newtheorem{proposition}[theorem]{Proposition}
\newtheorem{observation}[theorem]{Observation}
\theoremstyle{definition}
\newtheorem{definition}[theorem]{Definition}
\newtheorem{corollary}[theorem]{Corollary}
\newtheorem{fact}[theorem]{Fact}
\newcommand{\shay}[1]{\textcolor{blue}{\bf \{Shay: #1\}}}
\crefname{proposition}{proposition}{Propositions}
\title{Private Center Points and Learning of Halfspaces\thanks{A.~B.\ and K.~N.\ were supported by NSF grant no.~1565387. 
TWC: Large: Collaborative: Computing Over Distributed Sensitive Data. A.~B.\ was supported by ISF grant no.~152/17. Work done while A.~B.\ was visiting Georgetown University. U.~S.\ was supported by a gift from Google Ltd.}}
\author{
Amos Beimel\thanks{Dept.\ of Computer Science, Ben-Gurion University. {\tt amos.beimel@gmail.com}.}
\and
Shay Moran\thanks{Princeton University. {\tt shaymoran1@gmail.com}.}
\and
Kobbi Nissim\thanks{Dept. of Computer Science, Georgetown University. \tt{kobbi.nissim@georgetown.edu}.}
\and
Uri Stemmer\thanks{Dept.\ of Computer Science, Ben-Gurion University. {\tt u@uri.co.il}.}
}
\begin{document}

\maketitle
\begin{abstract}
We present a private learner for halfspaces over an arbitrary finite domain $X\subset \R^d$ with sample complexity $\poly(d,2^{\log^*|X|})$. The building block for this learner is a differentially private algorithm for locating an approximate center point of $m>\poly(d,2^{\log^*|X|})$ points -- a high dimensional generalization of the median function. Our construction establishes a relationship between these two problems that is reminiscent of the relation between the median and learning one-dimensional thresholds [Bun et al.\ FOCS '15]. This relationship suggests that the problem of privately locating a center point may have further applications in the design of differentially private algorithms.

We also provide a lower bound on the sample complexity for privately finding a point in the convex hull. For approximate  differential privacy, we show a lower bound of $m=\Omega(d+\log^*|X|)$, whereas for pure differential privacy $m=\Omega(d\log|X|)$.

\end{abstract}

\section{Introduction}

Machine learning models are often trained on sensitive personal information, e.g.,\ when analyzing healthcare records or social media data. There is hence an increasing awareness and demand for privacy preserving machine learning technology.
This motivated the line of works on {\em private learning}, initiated by \cite{KLNRS11}, {which} provides strong (mathematically proven) privacy protections for the training data. Specifically, these works aim at achieving {\em differential privacy}, a strong notion of privacy that is now increasingly being adopted by both academic researchers and industrial companies. Intuitively, a private learner is a PAC learner that guarantees that every single example has almost no effect on the resulting classifier. Formally, a private learner is a PAC learner that satisfies differential privacy w.r.t.\ its training data. The definition of differential privacy is,

\begin{definition}[\cite{DMNS06}]\label{def:dpIntro}
Let $\AAA$ be a randomized algorithm that operates on databases.
Algorithm $\AAA$ is $(\eps,\delta)$-{\em differentially private} if for any two databases $S,S'$ that differ on one row, and any event~$T$, we have 
$\Pr[\AAA(S)\in T]\leq e^{\eps}\cdot \Pr[\AAA(S')\in T]+\delta.$ 
The {notion} is referred to as {\em pure} differential privacy when $\delta=0$, and {\em approximate} differential privacy when $\delta>0$.
\end{definition}

The initial work of \cite{KLNRS11} 
 showed that any concept class $C$ is privately learnable with sample complexity $O(\log |C|)$ (we omit in the introduction the dependencies on accuracy and privacy parameters). Non-privately, $\Theta(VC(C))$ samples are necessary and sufficient to PAC learn $C$, and much research has been devoted to understanding how large the gap is between the sample complexity of private and non-private PAC learners. 
For {\em pure} differential privacy, it is known that a sample complexity of $\Theta(\log|C|)$ is required even for learning some simple concept classes such as one-dimensional thresholds, axis-aligned rectangles, balls, and halfspaces \citep{BBKN12,BNS13,FX14}. That is, generally speaking, learning with pure differential privacy requires sample complexity proportional to log the size of the hypothesis class. For example, in order to learn halfspaces in $\R^d$, one must consider some {\em finite} discretization of the problem, e.g.\ by assuming that input examples come from a finite set $X\subseteq\R^d$. A halfspace over $X$ is represented using $d$ point from $X$, and hence, learning halfspaces over $X$ with pure differential privacy requires sample complexity $\Theta(\log{|X| \choose d})=O(d\log|X|)$. In contrast, learning halfspaces non-privately requires sample complexity $O(d)$. In particular, when the dimension $d$ is constant, learning halfspaces non-privately is achieved with {\em constant} sample complexity, while learning with pure differential privacy requires sample complexity that is proportional to the representation length of domain elements.

For {\em approximate} differential privacy, the current understanding is more limited. 
Recent results established that the class of one-dimensional thresholds over a domain $X\subseteq\R$ requires sample complexity between $\Omega(\log^*|X|)$ and $2^{O(\log^*|X|)}$ (\cite{BNS13b,BNSV15,Bun16,ALMM18}). 
On the one hand, these results establish a separation between what can be learned with or without privacy, as they imply that privately learning one-dimensional thresholds over an infinite domain is impossible. 
On the other hand, these results show that, unlike with pure differential privacy, the sample complexity of learning one-dimensional thresholds can be much smaller than $\log|C|=\log|X|$. \cite{BNS13b} also established an upper bound of $\poly(d\cdot2^{\log^*|X|})$ for privately learning the class of axis-aligned rectangles over $X\subseteq\R^d$. 
In a nutshell, this concludes our current understanding of the sample complexity of approximate private learning. In particular, before this work, it was not known whether similar upper bounds (that grow slower than $\log|C|$) can be established for ``richer'' concept classes, such as halfspaces, balls, and polynomials.


We answer this question positively, focusing on privately learning halfspaces. The class of halfspaces forms an important primitive in machine learning as learning halfspaces implies learning many other concept classes~(\cite{BL98}). In particular, it is the basis of  popular algorithms such as neural nets and kernel machines, as well as various geometric classes (e.g., polynomial threshold functions, polytopes, and $d$-dimensional balls). 

\subsection{Our Results}

Our approach for privately learning halfspaces is based on a reduction to the task of privately finding a point in the convex hull of a given input dataset. That is, towards privately learning halfspaces we first design a sample-efficient differentially private algorithm for identifying a point in the convex hull of the given data, and then we show how to use such an algorithm for privately learning halfspaces.

\paragraph{Privately finding a point in the convex hull.}
We initiate the study of privately finding a point in the convex hull of a dataset $S\subseteq X\subseteq\R^d$. Even though this is a very natural problem (with important applications, in particular to learning halfspaces), it has not been considered before in the literature of differential privacy. One might try to solve this problem using the {\em exponential mechanism} of \cite{MT07}, which, given a dataset and a {\em quality function}, privately identifies a point with approximately maximum quality. To that end, one must first settle on a suitable quality function such that if a point $\pt{x}\in X$ has a high quality then this point is guaranteed to be in the convex hull of $S$. Note that the indicator function $q(\pt{x})=1$ if and only if $\pt{x}$ is in the convex hull of $S$ is not a good option, as every point $\pt{x}\in X$ has quality either 0 or 1, and the exponential mechanism only guarantees a solution with {\em approximately} maximum quality (with additive error larger than 1).

Our approach is based on the concept of {\em Tukey-depth} \citep{Tuk75}. Given a dataset $S\subseteq\R^d$,  a point $\pt{x}\in\R^d$ has the Tukey-depth at most $\ell$ if there exists a set $A \subseteq S$ of size $\ell$ such that $\pt{x}$ is not in the convex hull of $S\setminus A$. See \cref{def:Tukey} for an equivalent definition that has a geometric flavor. Instantiating the exponential mechanism with the Tukey-depth as the quality function results in a private algorithm for identifying a point in the convex hull of a dataset $S\subseteq X\subseteq\R^d$ with sample complexity $\poly(d,\log|X|)$.\footnote{We remark that the domain $X$ does not necessarily contain a point with a high Tukey-depth, even when the input points come from $X\subseteq\R^d$. Hence, one must first {\em extend} the domain $X$ to make sure that a good solution exists. This results in a private algorithm with sample complexity $O(d^3\log|X|)$.} 
We show that this upper bound can be improved to $\poly(d,2^{\log^*|X|})$. Our construction utilizes an algorithm by~\cite{BNS13b} for approximately maximizing (one-dimensional) quasi-concave functions with differential privacy (see Definition~\ref{def:quasiConcave} for quasi-concavity). To that end, we show that it is possible to {find a point with high}  Tukey-depth in iterations over the axes, and show that the {appropriate functions are} indeed quasi-concave. This allows us to instantiate the algorithm of \cite{BNS13b} to identify a point in the convex hull of the dataset one coordinate at a time. We obtain the following theorem.

\begin{theorem}[Informal]
Let $ \eps \leq 1$ and $\delta < 1/2$ and let $X \subset\R^d$. There exists an $(\eps,\delta)$-differentially private algorithm that given a dataset $S\in X^m$ identifies (w.h.p.) a point in the convex hull of $S$, provided that   
$m=|S|=\poly\left(d,2^{\log^*|X|},\frac{1}{\eps},\log\frac{1}{\delta}\right)$.
\end{theorem}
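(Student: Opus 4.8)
The plan is to reduce the $d$-dimensional problem to $d$ sequential applications of a one-dimensional private optimizer. The key structural observation is that Tukey-depth can be computed ``coordinate by coordinate'': fix a target depth parameter $k = k(m,d) \approx m/(d+1)$ (so that, by Helly-type / centerpoint arguments, a point of Tukey-depth $\geq k$ exists), and try to build a point $\pt{x} = (x_1,\ldots,x_d)$ whose Tukey-depth with respect to $S$ is at least $k$. First I would observe that if we have already committed to a prefix $(x_1,\ldots,x_{i-1})$, then the set of values $x_i$ that can be extended to some full point of high Tukey-depth forms an interval, and more importantly the ``best achievable depth'' as a function of $x_i$ is quasi-concave in the sense of \cref{def:quasiConcave}. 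So I would define, for each axis $i$, a quality function $q_i(z) = $ (the largest $\ell$ such that there is a completion $(x_{i+1},\ldots,x_d)$ making $(x_1,\ldots,x_{i-1},z,x_{i+1},\ldots,x_d)$ have Tukey-depth $\geq \ell$ with respect to $S$), prove $q_i$ is quasi-concave over the relevant one-dimensional grid, and feed it to the algorithm of \cite{BNS13b} for privately maximizing quasi-concave functions, which has sample complexity $\poly(2^{\log^*|X|})$ in the domain size (with the right dependence on the other parameters).

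Next I would assemble the pieces. Run the \cite{BNS13b} one-dimensional optimizer $d$ times in sequence, once per coordinate, each time on the appropriately restricted grid (the projection of the extended domain onto the $i$-th axis after conditioning on the choices so far), obtaining $x_1, x_2, \ldots, x_d$. Each invocation costs a fixed ``privacy budget slice'' $\eps/d$ and $\delta/d$, so by basic composition the whole procedure is $(\eps,\delta)$-differentially private; alternatively advanced composition gives a slightly better dependence, but basic composition already suffices for the stated $\poly$ bound. For correctness (utility), I would track how the best-achievable depth degrades: starting from a guaranteed depth of roughly $m/(d+1)$ before any coordinate is fixed, each application of the quasi-concave optimizer loses only an additive error term $\alpha = \poly(d, 2^{\log^*|X|}, 1/\eps, \log(1/\delta))$ to the true per-coordinate maximum, so after $d$ rounds the output point has Tukey-depth at least $m/(d+1) - d\alpha$, which is strictly positive — hence the point lies in the convex hull of $S$ — as long as $m$ exceeds the stated polynomial threshold. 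A subtlety worth flagging is the footnote issue: $X$ itself may contain no deep point, so ``the domain'' used inside each one-dimensional optimization must be a suitably refined grid (a bounded-precision extension of the coordinates appearing in $S$); I would argue that the refinement needed is mild — polynomially many bits per coordinate — so that $\log^*$ of the extended domain is still $O(\log^*|X|) + O(\log^* d)$ and the bound is unaffected.

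The main obstacle I expect is proving the quasi-concavity of the per-coordinate quality functions $q_i$ — this is where all the geometry lives. The cleanest route is via the geometric characterization of Tukey-depth (the ``equivalent definition'' referenced as \cref{def:Tukey}): a point has depth $\geq \ell$ iff every halfspace containing it contains $\geq \ell$ points of $S$. Conditioning on a coordinate prefix corresponds to intersecting with an affine subspace, and I would want to show that, within that subspace, the map sending $z$ to the maximum depth of a completion is unimodal. I anticipate this follows from a convexity argument: the region of points in the subspace with depth $\geq \ell$ is an intersection of (projected) halfspaces, hence convex, and its ``shadow'' on the $i$-th axis is an interval; since this holds for every $\ell$, the level sets of $q_i$ are nested intervals, which is exactly quasi-concavity. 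The remaining care is in handling the discretization — the grid points $z$ must be chosen so that the true maximizing completion is itself (approximately) on-grid — and in getting the error accounting tight enough that $d$ rounds of additive loss still leaves room inside a convex hull containing $\Omega(m/d)$ points in every halfspace. Once quasi-concavity and the grid setup are in place, the rest is composition bookkeeping and a union bound over the $d$ rounds.
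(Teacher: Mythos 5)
Your proposal matches the paper's construction essentially step for step: define per-coordinate quality functions $Q_{x^*_1,\dots,x^*_{i-1}}(x_i)=\max_{x_{i+1},\dots,x_d}\td(x^*_1,\dots,x^*_{i-1},x_i,\dots,x_d)$, prove they are quasi-concave, extend the domain so the maxima are attained on-grid, run the one-dimensional private quasi-concave optimizer $d$ times with a depth-loss budget of roughly $|S|/(2d(d+1))$ per round, and compose. Your quasi-concavity argument (level sets are projections of the convex Tukey regions onto a line, hence intervals) is exactly the route the paper attributes to \cref{fact:counting} before giving an elementary alternative via lines; the only small imprecision is your estimate that the extended grid needs only ``polynomially many bits per coordinate'' — the paper's construction uses $(dT^{d^2(d+1)})^{2^d}$ points per axis, i.e.\ exponentially many bits in $d$, though as you correctly note only $\log^*$ of this matters so the final bound is unaffected.
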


In fact, our algorithm returns a point with a large Tukey-depth, which is in particular a point in the convex hull of the dataset. This fact will be utilized by our reduction from learning halfspaces, and will allow us to get improved sample complexity bounds on privately learning halfspaces.

\paragraph{A privacy preserving reduction from halfspaces to convex hull.}
Our reduction can be thought of as a generalization of the results by \cite{BNSV15}, who showed that the task of privately learning {\em one-dimensional} thresholds is equivalent to the task of privately solving the {\em interior point problem}. In this problem, given a set of input numbers, the task is to identify a number between the minimal and the maximal input numbers. Indeed, this is exactly the one dimensional version of the convex-hull problem we consider. However, the reduction of \cite{BNSV15} does not apply for halfspaces, and we needed to design a different reduction.

Our reduction is based on the sample and aggregate paradigm: 
assume a differentially private algorithm~$\AAA$ which gets a (sufficiently large) dataset $D\subset \R^d$ and returns a point in the convex hull of $D$. 
This can be used to privately learn halfspaces as follows. 
Given an input sample $S$, partition it to sufficiently many subsamples $S_1,\dots,S_k$, 
and pick for each $S_i$ an arbitrary halfspace $h_i$ which is consistent with $S_i$. Next, apply $\AAA$ to privately find a point in the convex hull of the $h_i$'s (to this end represent each $h_i$ as a point in~$\R^{d+1}$ via its normal vector and bias), and output the halfspace $h$ corresponding to the returned point. It can be shown that if each of the $h_i$'s has a sufficiently low generalization error, which is true if the sample is big enough, then the resulting (privately computed) halfspace also has a low generalization error. 
Instantiating this reduction with our algorithm for the convex hull we get the following theorem.





\begin{theorem}[Informal]\label{thm:introHalfspaces}
Let $ \eps \leq 1$ and $\delta < 1/2$ and let $X \subset\R^d$. There exists an $(\eps,\delta)$-differentially private $(\alpha,\beta)$-PAC learner for halfspaces over examples from $X$ with sample complexity 
$m=\poly\left(d,2^{\log^*|X|},\frac{1}{\alpha\eps},\log\frac{1}{\beta\delta}\right)$.
\end{theorem}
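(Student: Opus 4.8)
The plan is to instantiate the subsample-and-aggregate paradigm with the differentially private convex-hull algorithm of the previous theorem, and to argue accuracy through a duality that converts ``large Tukey-depth in parameter space'' into ``small generalization error''. We work in the realizable PAC setting.

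First I would fix the discretization. Because the examples lie in a finite set $X\subset\R^d$, only finitely many labelings of a given subsample are realizable by a halfspace, and for each such labeling a consistent halfspace can be chosen whose normal vector and bias lie in an explicit finite grid $X'\subset\R^{d+1}$ with $|X'|\le|X|^{O(d)}$; in particular $\log^*|X'|\le\log^*|X|+O(1)$, so $2^{\log^*|X'|}=O(2^{\log^*|X|})$. Let $\AAA$ denote the $(\eps,\delta)$-DP algorithm of the previous theorem: on a dataset of $k$ points from $X'$ it returns, with high probability, a point of Tukey-depth at least $t=\Omega(k/d)$ with respect to the input, provided $k\ge\poly(d,2^{\log^*|X|},1/\eps,\log(1/\delta))$. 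The learner then works as follows: on input $S\in X^m$, partition $S$ into $k$ equal-size blocks $S_1,\dots,S_k$; for each $i$ pick non-privately a halfspace $h_i$, represented by its parameter vector in $X'$, that is consistent with $S_i$; run $\AAA$ on $(h_1,\dots,h_k)$ to obtain a parameter vector $\hat h$; and output the halfspace $\hat h$. Privacy is immediate: changing one example of $S$ changes a single block, hence at most one of the $k$ points handed to $\AAA$, so the learner inherits the $(\eps,\delta)$-DP guarantee of $\AAA$.

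For accuracy, the key step is the following observation. Fix a labeled example $(x,y)$. The set of parameter vectors $(w,b)\in\R^{d+1}$ that misclassify $(x,y)$, namely $\{(w,b):y(\langle w,x\rangle-b)\le 0\}$, is itself a closed halfspace in $\R^{d+1}$. Consequently, if $\hat h$ misclassifies $(x,y)$, then by the definition of Tukey-depth at least $t$ of the $h_i$'s also misclassify $(x,y)$. Taking the expectation over a fresh example drawn from the data distribution and applying Markov's inequality, $\error(\hat h)\le\Pr[\text{at least }t\text{ of the }h_i\text{ err}]\le\frac1t\sum_{i=1}^k\error(h_i)=\frac{k}{t}\,\overline{\error}$, where $\overline{\error}$ is the average generalization error of the $h_i$'s. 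Since each $h_i$ is consistent with the $m/k$ points of $S_i$ and halfspaces have VC dimension $d+1$, the standard realizable-case generalization bound gives $\error(h_i)=O\big((d\log(m/k)+\log(k/\beta))/(m/k)\big)$ except with probability $\beta/(2k)$; a union bound over $i$ controls $\overline{\error}$ with probability $1-\beta/2$. As $t/k=\Omega(1/d)$, choosing $m/k\ge\poly(d)\cdot\tilde O(1/\alpha)$ forces $\tfrac{k}{t}\overline{\error}\le\alpha$, hence $\error(\hat h)\le\alpha$ with probability $1-\beta$ once the failure probability of $\AAA$ is absorbed into $\beta$ and $\delta$ is taken small enough for $\AAA$. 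Plugging in $k=\poly(d,2^{\log^*|X|},1/\eps,\log(1/\delta))$ and $m=k\cdot\poly(d)\cdot\tilde O(1/\alpha)$ yields $m=\poly(d,2^{\log^*|X|},1/(\alpha\eps),\log(1/(\beta\delta)))$.

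I expect the main obstacle to be the accuracy argument rather than the privacy one. Two points need care: the Tukey-depth returned by $\AAA$ must be a $1/\poly(d)$ fraction of $k$, so that the Markov step costs only a $\poly(d)$ factor in the sample size; and the discretization must be set up so that a consistent halfspace genuinely lies in a finite domain $X'$ with $\log^*|X'|=\log^*|X|+O(1)$ and so that $\AAA$'s convex-hull (Tukey-depth) guarantee applies to the dataset $(h_1,\dots,h_k)\in(X')^k$ --- this is the place where the internal domain extension of $\AAA$ is invoked. By contrast, the privacy composition and the VC generalization bound are routine.
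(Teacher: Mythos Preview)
Your proposal is correct and mirrors the paper's own argument essentially step for step: the same subsample-and-aggregate reduction, the same discretization of halfspaces into a finite domain $X'\subset\R^{d+1}$ with $|X'|\le|X|^{O(d)}$, the same privacy argument via disjoint blocks, and the same accuracy argument via the duality ``the set of parameter vectors erring on $(x,y)$ is a halfspace in $\R^{d+1}$'' combined with the Tukey-depth bound $t=\Omega(k/d)$ to obtain $\error(\hat h)\le\frac{1}{t}\sum_i\error(h_i)$. The only cosmetic difference is that you phrase the pointwise inequality $E_{\hat h}(x)\le\frac{1}{t}\sum_i E_{h_i}(x)$ as a Markov step, which is equivalent.
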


In particular, for any constant $d$, \cref{thm:introHalfspaces} gives a private learner for halfspaces over $X\subseteq\R^d$ with sample complexity $2^{O(\log^*|X|)}$. Before our work, this was known only for $d=1$.

\paragraph{A lower bound for finding a point in the convex hull.}
Without privacy considerations, finding a point in the convex hull of the data is trivial. Nevertheless, we show that any $(\eps,\delta)$-differentially private algorithm for this task (in $d$ dimensions) must have sample complexity $m=\Omega(\frac{d}{\eps}\log\frac{1}{\delta}+{\log^* {\lvert X\rvert}})$. {In comparison, our algorithm requires sample of size at least
$\tilde{O}(d^{2.5}2^{O(\log^*{\lvert X\rvert})}/\epsilon)$ (ignoring the dependency on $\delta$ and $\beta$).}

Recall that the sample complexity of privately learning a class $C$ is always at most $O(\log|C|)$. Hence, it might be tempting to guess that a sample complexity of $m=O(\log|X|)$ should suffice for privately finding a point in the convex hull of a dataset $S\subseteq X\subseteq\R^d$, even with pure $(\eps,0)$-differential privacy. We show that this is not the case, and that any pure $(\eps,0)$-differentially private algorithm for this task must have sample complexity $m=\Omega(\frac{d}{\eps}\log|X|)$.

\subsection{Other Related Work}\label{sec:otherRelated}
Most related to our work is the work on private learning and its sample and time complexity by \cite{BDMN05, KLNRS11,BDMN05,BBKN12,CH11,BNS13,FX14,BNS13b,BNSV15,BZ16}. As some of these works demonstrate efficiency gaps between private and non-private learning, alternative models have been explored including semi-supervised learning (\cite{BNS15}), learning multiple concepts (\cite{BNS16}), and prediction (\cite{DworkF18}, \cite{BassilyTT18}).

\cite{DV08} showed an efficient (non-private) learner for halfspaces that works in (a variant of) the {\em statistical query (SQ)} model of \cite{Kearns98}. It is known that SQ learners can be transformed to preserve differential privacy \citep{BDMN05}, and the algorithm of \cite{DV08} yields a differentially private efficient learner for halfspaces over examples from $X\subseteq\R^d$ with sample complexity $\poly(d,\log|X|)$. 
Another related work is that of \cite{HsuRRU14} who constructed an algorithm for {\em approximately} solving linear programs with differential privacy. While learning halfspaces {\em non-privately} easily reduces to solving linear programs, it is not clear whether the results of \cite{HsuRRU14} imply a {\em private} learner for halfspaces (due to the types of errors they incur).

\section{Preliminaries}

In this section we introduce a tool that enables our constructions, describe the geometric object we use throughout the paper, and present some of their properties. 

\paragraph{Notations.} 
The input of our algorithm is a multiset $S$ whose elements are taken (possibly with repetition) from a set $X$. We will abuse notation and write that $S \subseteq X$. Databases $S_1$ and $S_2$ are called {\em neighboring} if they differ in exactly one entry. Throughout this paper we use $\eps$ and $\delta$ for the privacy parameters, $\alpha$ for the error parameter, and $\beta$ for the confidence parameter, and $m$ for the sample size.
In this appendix we define differentially private algorithms and the PAC learning model.

\subsection{Preliminaries from Differential Privacy} 

Consider a database where each record contains information of an individual. An algorithm is said to preserve differential privacy if a change of a single record of the database (i.e., information of an individual) does not significantly change the output distribution of the algorithm. Intuitively, this means that the information infer about an individual from the output of a differentially-private algorithm is similar to the information that would be inferred had the individual's record been arbitrarily modified or removed. Formally:

\begin{definition}[Differential privacy~\citep{DMNS06,DKMMN06}] \label{def:dp} 
A randomized algorithm $\Alg$ is $(\eps,\delta)$-differentially private if for all neighboring databases $\db_1,\db_2\in X^m$, and for all sets $\mathcal{F}$ of outputs,
\begin{eqnarray}
\label{eqn:diffPrivDef}
  & \Pr[\Alg(\db_1) \in \mathcal{F}] \leq \exp(\eps) \cdot \Pr[\Alg(\db_2) \in \mathcal{F}] + \delta,  &
\end{eqnarray}
where the probability is taken over the random coins of $\Alg$. 
When $\delta=0$ we omit it and say that $\Alg$ preserves $\eps$-differential privacy.
\end{definition}
We use the term {\em pure} differential privacy when $\delta=0$ and the term {\em approximate} differential privacy when $\delta>0$, in which case $\delta$ is typically a negligible function of the database size $m$.

We will later present algorithms that access their input database using (several) differentially private algorithms. We will use the following composition theorems. 

\begin{theorem}[Basic composition]\label{thm:composition1}
If $\Alg_1$ and $\Alg_2$ satisfy $(\eps_1,\delta_1)$ and $(\eps_2,\delta_2)$ differential privacy, respectively, then their concatenation $\Alg(S)=\langle \Alg_1(S),\Alg_2(S) \rangle$ satisfies $(\eps_1+\eps_2,\delta_1+\delta_2)$-differential privacy.
\end{theorem}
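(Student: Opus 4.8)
The plan is to fix a pair of neighboring databases $\db_1,\db_2\in X^m$ and an event $\mathcal{F}$ in the product output space of $\Alg=\langle\Alg_1,\Alg_2\rangle$, and then to integrate over the output of $\Alg_1$, treating the two mechanisms one at a time. The only structural input I would use is that $\Alg_1$ and $\Alg_2$ draw independent coins, so the conditional law of the second coordinate, given that the first coordinate equals $a$, is exactly the law of $\Alg_2(\db_i)$ regardless of $a$; writing $\mathcal{F}_a=\{b:(a,b)\in\mathcal{F}\}$, this yields the identity $\Pr[\Alg(\db_i)\in\mathcal{F}]=\E_{a\sim\Alg_1(\db_i)}\!\bigl[\Pr[\Alg_2(\db_i)\in\mathcal{F}_a]\bigr]$.

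Before combining, I would record one auxiliary fact: $(\eps,\delta)$-differential privacy extends from events to $[0,1]$-valued test functions, i.e.\ if $\Alg'$ is $(\eps,\delta)$-private and $g$ maps outputs into $[0,1]$, then $\E[g(\Alg'(\db_1))]\le e^{\eps}\E[g(\Alg'(\db_2))]+\delta$; this follows from the layer-cake identity $\E[g(Y)]=\int_0^1\Pr[g(Y)>t]\,dt$ together with \eqref{eqn:diffPrivDef} applied to each superlevel event $\{g(\cdot)>t\}$. Now the calculation proceeds in three steps. First, applying the $(\eps_2,\delta_2)$-privacy of $\Alg_2$ to each event $\mathcal{F}_a$ and using $\min(1,x+y)\le\min(1,x)+y$ for $x,y\ge0$, we get $\Pr[\Alg_2(\db_1)\in\mathcal{F}_a]\le g(a)+\delta_2$ with $g(a):=\min\bigl(1,\,e^{\eps_2}\Pr[\Alg_2(\db_2)\in\mathcal{F}_a]\bigr)\in[0,1]$, hence $\Pr[\Alg(\db_1)\in\mathcal{F}]\le\E_{a\sim\Alg_1(\db_1)}[g(a)]+\delta_2$. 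Second, the auxiliary fact applied to $\Alg_1$ and the test function $g$ bounds this by $e^{\eps_1}\E_{a\sim\Alg_1(\db_2)}[g(a)]+\delta_1+\delta_2$. Third, since $g(a)\le e^{\eps_2}\Pr[\Alg_2(\db_2)\in\mathcal{F}_a]$, independence again gives $\E_{a\sim\Alg_1(\db_2)}[g(a)]\le e^{\eps_2}\Pr[\Alg(\db_2)\in\mathcal{F}]$; chaining the three bounds produces $\Pr[\Alg(\db_1)\in\mathcal{F}]\le e^{\eps_1+\eps_2}\Pr[\Alg(\db_2)\in\mathcal{F}]+\delta_1+\delta_2$, as required.

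The step I expect to require the most care is getting the additive term to be exactly $\delta_1+\delta_2$ rather than the weaker $\delta_1+e^{\eps_1}\delta_2$ that a naive two-fold application of the two guarantees would produce: this is precisely why $\delta_2$ is peeled off via the $\min(1,\cdot)$ truncation \emph{before} invoking the $\Alg_1$ guarantee, and why $g$ is capped at $1$ so that the auxiliary fact applies to it. The only other thing to watch is measure-theoretic bookkeeping — Fubini/Tonelli for the slicing and measurability of $\mathcal{F}_a$ and of $g$ — when the output spaces are not discrete; in the discrete case all the integrals are sums and this is routine. Finally, the same argument composes $k$ mechanisms by induction on $k$, giving $\bigl(\sum_i\eps_i,\sum_i\delta_i\bigr)$, though that is beyond the stated two-mechanism case.
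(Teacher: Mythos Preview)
The paper states \cref{thm:composition1} as a standard background result and does not supply a proof, so there is nothing to compare against on that axis. Your argument is correct and is essentially the textbook proof: slice the product event via Fubini, apply the $\Alg_2$ guarantee fiberwise, then apply the $\Alg_1$ guarantee to the resulting $[0,1]$-valued functional using the layer-cake extension of \eqref{eqn:diffPrivDef}. The $\min(1,\cdot)$ truncation you highlight is exactly the device that keeps the additive term at $\delta_1+\delta_2$ rather than $\delta_1+e^{\eps_1}\delta_2$, and your measurability caveats are appropriate.
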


Moreover, a similar theorem holds for the adaptive case, where an algorithm  uses  $k$ {\em adaptively chosen} differentially private algorithms (that is, when the choice of the next differentially private algorithm that is used depends on the outputs of the previous differentially private algorithms).

\begin{theorem}[\citep{DKMMN06, DworkLei}]\label{thm:composition3}
An algorithm that adaptively uses $k$  algorithms that preserves $(\eps/k,\delta/k)$-differential privacy (and does not access the database otherwise) ensures $(\eps,\delta)$-differential privacy.
\end{theorem}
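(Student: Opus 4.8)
\emph{Proof proposal.} The plan is to reduce to bounding the statistical divergence between the full \emph{transcripts} produced on neighboring databases, and then to obtain the bound by iterating a two-fold adaptive composition step $k$ times (paralleling \cref{thm:composition1}, but accounting for adaptivity). Write $\AAA_1,\dots,\AAA_k$ for the sequence of mechanisms invoked: at step $i$ both the choice of $\AAA_i$ and the mechanism itself may depend on the outputs $a_1,\dots,a_{i-1}$ of the previous steps, but for every fixed prefix $(a_1,\dots,a_{i-1})$ the map $S\mapsto \AAA_i(S;a_1,\dots,a_{i-1})$ is $(\eps/k,\delta/k)$-differentially private. Since the algorithm does not access the database otherwise, its final output is a (randomized) function of the transcript $(a_1,\dots,a_k)$ alone; as differential privacy is closed under post-processing, it suffices to prove that the distribution of the transcript is $(\eps,\delta)$-differentially private.

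The technical heart is a decomposition characterization of approximate differential privacy, which I would isolate as a lemma: a mechanism $\mathcal M$ is $(\gamma,\tau)$-differentially private if and only if for every pair of neighboring databases $S,S'$ the output distribution $\mathcal M(S)$ can be written as $P_{S,S'}+R_{S,S'}$, where $P_{S,S'}$ and $R_{S,S'}$ are nonnegative measures with $P_{S,S'}(z)\le e^{\gamma}\cdot\Pr[\mathcal M(S')=z]$ pointwise and $R_{S,S'}$ has total mass at most $\tau$. The ``if'' direction is immediate: for any event $F$, $\Pr[\mathcal M(S)\in F]=P_{S,S'}(F)+R_{S,S'}(F)\le e^{\gamma}\Pr[\mathcal M(S')\in F]+\tau$. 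For ``only if'' I would take $P_{S,S'}(z)=\min\{\Pr[\mathcal M(S)=z],\,e^{\gamma}\Pr[\mathcal M(S')=z]\}$ and $R_{S,S'}=\mathcal M(S)-P_{S,S'}\ge 0$; applying the privacy guarantee to the ``bad set'' $T=\{z:\Pr[\mathcal M(S)=z]>e^{\gamma}\Pr[\mathcal M(S')=z]\}$ shows that the mass of $R_{S,S'}$, which equals $\sum_z\bigl(\Pr[\mathcal M(S)=z]-e^{\gamma}\Pr[\mathcal M(S')=z]\bigr)_{+}$, is at most $\tau$. (For mechanisms with continuous range one replaces probability mass functions by densities; the finite-range case suffices here.)

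Given the lemma, the two-fold adaptive composition step is a short computation. Suppose $\mathcal M$ is $(\gamma_1,\tau_1)$-private and $\mathcal N(\,\cdot\,;b)$ is $(\gamma_2,\tau_2)$-private for every fixed auxiliary input $b$; I claim $\langle\mathcal M,\mathcal N\rangle$, which on input $S$ outputs $(b,a)$ with $b\sim\mathcal M(S)$ and $a\sim\mathcal N(S;b)$, is $(\gamma_1+\gamma_2,\tau_1+\tau_2)$-private. Fix neighbors $S,S'$, decompose $\mathcal M(S)=P^{1}+R^{1}$ and $\mathcal N(S;b)=P^{2,b}+R^{2,b}$ as in the lemma, and expand
\[
\Pr[\langle\mathcal M,\mathcal N\rangle(S)=(b,a)]=\bigl(P^{1}(b)+R^{1}(b)\bigr)\bigl(P^{2,b}(a)+R^{2,b}(a)\bigr).
\]
The term $P^{1}(b)\,P^{2,b}(a)$ is at most $e^{\gamma_1}\Pr[\mathcal M(S')=b]\cdot e^{\gamma_2}\Pr[\mathcal N(S';b)=a]=e^{\gamma_1+\gamma_2}\Pr[\langle\mathcal M,\mathcal N\rangle(S')=(b,a)]$ pointwise, while the total mass of the remaining three terms is at most $\tau_2\sum_b P^{1}(b)+(1+\tau_2)\sum_b R^{1}(b)$; writing $\sum_b R^{1}(b)=\tau_1'\le\tau_1$ and $\sum_b P^{1}(b)=1-\tau_1'$, this equals $\tau_2(1-\tau_1')+(1+\tau_2)\tau_1'=\tau_2+\tau_1'\le\tau_1+\tau_2$. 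Thus the transcript distribution on $S$ is a pointwise $e^{\gamma_1+\gamma_2}$-bounded part plus a remainder of mass $\le\tau_1+\tau_2$, so by the lemma $\langle\mathcal M,\mathcal N\rangle$ is $(\gamma_1+\gamma_2,\tau_1+\tau_2)$-private. Inducting on $k$ — treating the composition of the first $i$ mechanisms as ``$\mathcal M$'' (which is $(i\eps/k,\,i\delta/k)$-private by the induction hypothesis) and the $(i{+}1)$-st mechanism, for each fixed transcript-prefix, as ``$\mathcal N$'' — gives that the full transcript is $(\eps,\delta)$-differentially private, which completes the argument.

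I expect the only genuinely delicate point to be formulating the decomposition lemma so that the $\tau$'s add \emph{without} a spurious multiplicative loss: if one instead uses the (also standard) reformulation of $(\gamma,\tau)$-privacy as ``$\mathbb E[h(\mathcal M(S))]\le e^{\gamma}\mathbb E[h(\mathcal M(S'))]+\tau$ for all $h$ with range in $[0,1]$'' (which follows from the definition via the layer-cake identity $h=\int_0^1\mathbb 1[h\ge t]\,dt$) and naively chains the two guarantees, one ends up with an additive term of the form $\tau_2+e^{\gamma_2}\tau_1$ rather than $\tau_1+\tau_2$; the mass-bounded-remainder decomposition above is exactly what avoids this. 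Everything else is bookkeeping, and the resulting statement is the adaptive basic composition theorem attributed to \citep{DKMMN06,DworkLei}.
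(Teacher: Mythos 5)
The paper states \cref{thm:composition3} without proof, citing \citep{DKMMN06,DworkLei}, so there is no in-paper argument to compare against. Your proposal is a correct and complete proof, and it is essentially the standard argument for adaptive basic composition: reduce to the transcript distribution via closure under post-processing; isolate the decomposition characterization of $(\gamma,\tau)$-privacy as an $e^{\gamma}$-pointwise-dominated piece plus a remainder of mass at most $\tau$; prove the two-fold step $(\gamma_1,\tau_1)\circ(\gamma_2,\tau_2)\Rightarrow(\gamma_1+\gamma_2,\tau_1+\tau_2)$ by expanding the product of the two decompositions; and induct on $k$. The bookkeeping in the two-fold step checks out: the three cross and remainder terms have mass $\tau_2\sum_b P^1(b)+(1+\tau_2)\sum_b R^1(b)=\tau_2+\sum_b R^1(b)\le\tau_1+\tau_2$, using $\sum_b P^1(b)+\sum_b R^1(b)=1$. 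Your closing caveat is also the right diagnosis of where care is needed: chaining the expectation-form guarantee $\mathbb{E}[h(\mathcal{M}(S))]\le e^{\gamma}\,\mathbb{E}[h(\mathcal{M}(S'))]+\tau$ naively yields $e^{\gamma_2}\tau_1+\tau_2$ in one step and, across $k$ steps, a geometric build-up $\frac{\delta}{k}\sum_{j=0}^{k-1}e^{j\eps/k}\ge\delta$, which overshoots the clean additive $\delta$ of the theorem statement; the mass-bounded-remainder decomposition is precisely what keeps the $\delta$'s purely additive.
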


Note that the privacy guaranties of the above bound deteriorates linearly with the number of interactions. By bounding the {\em expected} privacy loss in each interaction (as opposed to worst-case), \cite{DRV10} showed the following stronger composition theorem, where privacy deteriorates (roughly) as $\sqrt{k}\eps+k\eps^2$ (rather than $k\eps$).

\begin{theorem}[Advanced composition~\cite{DRV10}, restated]\label{thm:composition2}
Let $0<\eps_0,\delta'\leq1$, and let $\delta_0\in[0,1]$. An algorithm that adaptively uses $k$ algorithms that preserves $(\eps_0,\delta_0)$-differential privacy (and does not access the database otherwise) ensures $(\eps,\delta)$-differential privacy, where $\eps=\sqrt{2k\ln(1/\delta')}\cdot\eps_0+2k\eps_0^2$ and $\delta = k\delta_0+\delta'$.
\end{theorem}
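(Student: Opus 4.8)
The statement is the Dwork--Rothblum--Vadhan advanced composition theorem, so the plan is to reproduce their privacy-loss martingale argument; note that \cref{thm:composition1,thm:composition3} do not help here, since they only yield the weaker bound $\eps=k\eps_0$ and the whole point is the $\sqrt{k}$ improvement. Fix neighboring databases $\db_1,\db_2$, write $o=(o_1,\dots,o_k)$ for the transcript produced by the $k$ adaptively chosen mechanisms on a given input, let $o_{<i}=(o_1,\dots,o_{i-1})$, and define the per-round and total privacy losses
\[
  Z_i(o)=\ln\frac{\Pr[o_i\mid o_{<i},\ \db_1]}{\Pr[o_i\mid o_{<i},\ \db_2]},\qquad Z(o)=\sum_{i=1}^{k}Z_i(o).
\]
It suffices to show that, when $o$ is drawn from the interaction run on $\db_1$, we have $Z(o)\le\eps$ except with probability at most $\delta$: for any output event $T$, splitting according to whether $Z\le\eps$ gives $\Pr[o\in T\mid\db_1]\le e^{\eps}\Pr[o\in T\mid\db_2]+\Pr[Z>\eps]$, and the symmetric inequality follows identically, which is exactly $(\eps,\delta)$-differential privacy.

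\emph{Reducing to pure privacy.} I would first dispose of $\delta_0$. Using the standard reduction from approximate to pure differential privacy (\cite{DRV10}), conditioned on any history an $(\eps_0,\delta_0)$-private mechanism can be coupled, up to statistical distance $\delta_0$, with one whose per-round privacy loss is bounded by $\eps_0$ in absolute value. A union bound over the $k$ rounds shows that, except on a ``coupling failure'' event of probability at most $k\delta_0$, the interaction behaves as an adaptive composition of \emph{pure} $\eps_0$-private mechanisms; this accounts for the $k\delta_0$ summand in $\delta$, and it remains to establish an $(\eps,\delta')$ bound in the pure case.

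\emph{Martingale concentration.} For pure $\eps_0$-privacy, conditioned on any realizable prefix $o_{<i}$ we have $|Z_i|\le\eps_0$ pointwise, and $\E[Z_i\mid o_{<i}]\le\eps_0(e^{\eps_0}-1)\le 2\eps_0^2$ for $\eps_0\le 1$ (the bound on the KL-divergence between two $\eps_0$-indistinguishable distributions, using convexity of $x\mapsto e^{x}-1$ on $[0,1]$). Hence the partial sums of $Z_i-\E[Z_i\mid o_{<i}]$ form a martingale with bounded increments; bounding the moment generating function of the increments (as in~\cite{DRV10} --- a plain Azuma--Hoeffding bound loses a constant factor) yields $\Pr[Z>2k\eps_0^2+t]\le\exp(-t^2/(2k\eps_0^2))$. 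Taking $t=\sqrt{2k\ln(1/\delta')}\,\eps_0$ bounds the right-hand side by $\delta'$, so $Z\le\eps:=\sqrt{2k\ln(1/\delta')}\,\eps_0+2k\eps_0^2$ except with probability $\delta'$. Combined with the previous step, the interaction is $(\eps,\,k\delta_0+\delta')$-differentially private, as claimed.

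\emph{Main obstacle.} The delicate points are both about adaptivity. One must verify that $|Z_i|\le\eps_0$ and $\E[Z_i\mid o_{<i}]\le 2\eps_0^2$ hold conditioned on \emph{every} realizable history, so that the martingale structure is legitimate and the moment-generating-function bound applies round by round despite the next mechanism depending on past outputs; and one must make the $\delta_0$-to-pure reduction precise in the adaptive setting, e.g.\ by explicitly coupling the true interaction with an idealized pure-DP interaction and controlling the total coupling-failure probability by a union bound. Everything else is routine calculation.
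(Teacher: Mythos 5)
The paper does not prove this theorem---it is restated from~\cite{DRV10} and invoked as a black box---so there is no in-paper argument to compare against. Your sketch correctly reproduces the DRV privacy-loss martingale proof: the reduction to a tail bound on the log-likelihood ratio $Z$, the round-by-round coupling that trades $\delta_0>0$ for a pure-DP idealization at total cost $k\delta_0$ in statistical distance, the per-round bounds $|Z_i|\le\eps_0$ and $\E[Z_i\mid o_{<i}]\le\eps_0(e^{\eps_0}-1)\le 2\eps_0^2$ for $\eps_0\le 1$, and the martingale concentration. One small clarification on the last step: the form of Azuma--Hoeffding that applies Hoeffding's lemma to the conditional range of $Z_i$ itself, namely $[-\eps_0,\eps_0]$ of length $2\eps_0$, already yields $\Pr[\sum(Z_i-\E[Z_i\mid o_{<i}])>t]\le\exp(-t^2/(2k\eps_0^2))$ directly; the constant-factor loss you flag afflicts only the cruder bounded-differences formulation, so no separate MGF computation is needed. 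Your ``main obstacle'' paragraph correctly names the remaining bookkeeping (verifying the conditional per-round bounds under adaptivity, and making the $\delta_0$-coupling rigorous round by round), which is indeed where the work lies in a fully detailed write-up.
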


\subsection{Preliminaries from Learning Theory}

We next define the probably approximately correct (PAC) model of~\cite{Valiant84}.
A concept $c:X\rightarrow \{0,1\}$ is a predicate that labels {\em examples} taken from the domain $X$ by either 0 or 1.  A \emph{concept class} $C$ over $X$ is a set of concepts (predicates) mapping $X$ to $\{0,1\}$. A learning algorithm is given examples sampled according to an unknown probability distribution $\DDD$ over $X$, and labeled according to an unknown {\em target} concept $c\in C$. The learning algorithm is successful when it outputs a hypothesis $h$ that approximates the target concept over samples from $\DDD$. More formally:

\begin{definition}
The {\em generalization error} of a hypothesis $h:X\rightarrow\{0,1\}$ is defined as 
$$\error_{\DDD}(c,h)=\Pr_{x \sim \DDD}[h(x)\neq c(x)].$$ 
If $\error_{\DDD}(c,h)\leq\alpha$ we say that $h$ is {\em $\alpha$-good} for $c$ and $\DDD$.
\end{definition}

\begin{definition}[PAC Learning~\citep{Valiant84}]\label{def:PAC}
Algorithm $\Alg$ is an {\em $(\alpha,\beta,m)$-PAC learner} for a concept
class $C$ over $X$ using hypothesis class $H$ if for all 
concepts $c \in C$, all distributions $\DDD$ on $X$,
given an input of $m$ samples $\db =(z_1,\ldots,z_m)$, where $z_i=(x_i,c(x_i))$ and each $x_i$
is drawn i.i.d.\ from $\DDD$, algorithm $\Alg$ outputs a
hypothesis $h\in H$ satisfying
$$\Pr[\error_{\DDD}(c,h)  \leq \alpha] \geq 1-\beta,$$
where the probability is taken over the random choice of
the examples in $\db$ according to $\DDD$ and the random coins  of the learner $\Alg$.
If $H\subseteq C$ then $\Alg$ is called a {\em proper} PAC learner; otherwise, it is called an {\em improper} PAC learner.
\end{definition}

\begin{definition}
For a labeled sample $\db=(x_i,y_i)_{i=1}^m$, the {\em empirical error} of $h$ is
$$\error_S(h) = \frac{1}{m} |\{i : h(x_i) \neq y_i\}|.$$
\end{definition}

\subsection{Private Learning}\label{sec:PPAC}
Consider a learning algorithm $\AAA$ in the probably approximately correct (PAC) model of~\cite{Valiant84}. We say that $\AAA$ is a {\em private} learner if it also satisfies differential privacy w.r.t.\ its training data. Formally,
\begin{definition}[Private PAC Learning~\citep{KLNRS11}]
Let $\Alg$ be an algorithm that gets an input $\db =(z_1,\ldots,z_m)${, where each $z_i$ is a labeled example}. Algorithm $\Alg$ is an {\em $(\eps,\delta)$-differentially private $(\alpha,\beta)$-PAC learner with sample complexity $m$} for a concept
class $C$ over $X$ using hypothesis class $H$ if
\begin{description}
\item{\sc Privacy.} Algorithm $\Alg$ is $(\eps,\delta)$-differentially private (as in  \cref{def:dp});
\item{\sc Utility.} Algorithm $\Alg$ is an {\em $(\alpha,\beta)$-PAC learner} for $C$ with sample complexity $m$ using hypothesis class $H$ {(as in  \cref{def:PAC})}.
\end{description}
\end{definition}

Note that the utility requirement in the above definition is an average-case requirement, as the learner is only required to do well on typical samples (i.e., samples drawn i.i.d. from a distribution $\DDD$ and correctly labeled by a target concept $c\in C$). In contrast, the privacy requirement is a worst-case requirement, {which} must hold for every pair of neighboring databases (no matter how they were generated, even if they are not consistent with any concept in $C$).

\subsection{A Private Algorithm for Optimizing Quasi-concave Functions -- $\AlgRecConcave$}
We next describe properties of an algorithm $\AlgRecConcave$ of \cite{BNS16a}. This algorithm
is given a quasi-concave function $Q$ (defined below) and privately finds a point $x$ such that $Q(x)$ is close to its maximum provided that the maximum of $Q(x)$ is large enough
(see~(\ref{eq:largeQ})). 
   
\begin{definition}\label{def:quasiConcave} 
A function $Q(\cdot)$ is quasi-concave if $Q(\ell) \geq \min\set{Q(i),Q(j)}$ for every $i < \ell  < j$.
\end{definition}

\begin{definition}[Sensitivity] The sensitivity of a function $f : X^m \rightarrow \R$ is the smallest $k$ such that for
every neighboring $D,D' \in X^m$, we have $|f(D)-f(D')
| \leq k$.
\end{definition}


\begin{proposition}[Properties of Algorithm $\AlgRecConcave$~\citep{BNS16a}]\label{prop:aRecConcave}
Let $Q:X^*\times\tilde{X}\rightarrow\R$ be a sensitivity-1 function (that is, for every $x \in \tilde{X}$, the function
$Q(\cdot,x)$ has sensitivity $1$).
Denote $\tilde{T}=|\tilde{X}|$ and let $\alpha\leq\frac{1}{2}$ and  $\beta,\eps,\delta,r$ be parameters.
There exits an $(\eps,\delta)$-differentially private algorithm, called $\AlgRecConcave$, such that the following holds.
If $\AlgRecConcave$ is executed on a database $S\in X^*$ such that $Q(S,\cdot)$ is quasi-concave and in addition
\begin{equation}
\label{eq:largeQ}
\max_{i\in\tilde{X}}\{Q(S,i)\} \geq r \geq
8^{\log^* \tilde{T}} \cdot \frac{12 \log^* \tilde{T}}{\alpha\eps}\log\Big(\frac{192(\log^* \tilde{T})^2}{\beta\delta}\Big),
\end{equation}
then with probability at least $1-\beta$ the algorithm outputs an index $j$ s.t.\ 
$Q(S,j)\geq(1-\alpha)r$.
\end{proposition}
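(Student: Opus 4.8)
The plan is to re-derive the guarantee of the algorithm $\AlgRecConcave$ of \cite{BNS16a}. The algorithm is recursive in the size $\tilde T=|\tilde X|$ of the optimization range (which I identify with $\{1,\dots,\tilde T\}$), and the recursion has depth $\Theta(\log^*\tilde T)$; the three quantitative features of~\eqref{eq:largeQ} are exactly the costs of the three things one must control across these $\Theta(\log^*\tilde T)$ levels --- the accuracy loss (which compounds multiplicatively, giving the factor $8^{\log^*\tilde T}$), the privacy loss (which composes linearly, giving $\tfrac{\log^*\tilde T}{\alpha\eps}$), and the failure probability (a union bound over levels, together with the failure mode of the approximate-DP primitive used at each level, giving the $\log\tfrac{(\log^*\tilde T)^2}{\beta\delta}$ term).

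First I would handle the base case: when $\tilde T$ is below an absolute constant, run the exponential mechanism of \cite{MT07} with the sensitivity-$1$ score $Q(S,\cdot)$. Its standard analysis gives, with probability $\ge 1-\beta$, an index $j$ with $Q(S,j)\ge\max_i Q(S,i)-\tfrac{2}{\eps}\ln(\tilde T/\beta)$; since $\tilde T=O(1)$ and $\max_i Q(S,i)\ge r$, this is at least $(1-\alpha)r$ whenever $r\ge\tfrac{2}{\alpha\eps}\ln(O(1)/\beta)$, which is implied by~\eqref{eq:largeQ}. For the recursive step I would, when $\tilde T$ is large, partition $\{1,\dots,\tilde T\}$ into contiguous blocks $I_1,I_2,\dots$, pass to an auxiliary optimization over a block structure of size $O(\log\tilde T)$, solve it recursively, and then recurse into the block the recursion points to (applying the exponential mechanism once that block has constant size). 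The reusable gadget here is the auxiliary score $j\mapsto\max_{x\in I_j}Q(S,x)$: it has sensitivity $1$ (changing one row of $S$ shifts every $Q(S,x)$ by at most $1$, hence shifts each block-maximum by at most $1$), and it is quasi-concave --- if $j_1<j_2<j_3$ admit witnesses $x_1\in I_{j_1}$, $x_3\in I_{j_3}$ of value $\ge v$, then every $x_2\in I_{j_2}$ has $x_1<x_2<x_3$, so $Q(S,x_2)\ge\min\{Q(S,x_1),Q(S,x_3)\}\ge v$ and hence $\max_{x\in I_{j_2}}Q(S,x)\ge v$.

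I expect the crux to be making one recursive step genuinely cut the domain from $\tilde T$ down to $O(\log\tilde T)$ (a naive block partition with a small number of blocks only cuts it by a $\log\tilde T$ factor), so that the recursion bottoms out after only $\Theta(\log^*\tilde T)$ levels, together with showing that the required lower bound on $\max Q(S,\cdot)$ degrades by no more than a constant factor (at most $8$) at each such step when passing to the auxiliary problem and localizing back to a block. This refined partition-and-score construction, and the constant-factor accuracy accounting it forces, is the technical content of \cite{BNS16a} --- it is the quasi-concave generalization of the recursive technique underlying the known $2^{O(\log^*)}$-type bounds for one-dimensional thresholds and the interior-point problem (cf.\ \cite{BNSV15}). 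I would treat this step as given and spend the remaining effort on the bookkeeping.

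Finally I would assemble the parameters. There are $L=\Theta(\log^*\tilde T)$ recursion levels. For privacy, run the primitive at each level with parameters $(\eps/L,\delta/L)$; by the adaptive composition theorem for differentially private subroutines (\cref{thm:composition3}) the whole algorithm is $(\eps,\delta)$-differentially private, which is the source of the $\Theta(\log^*\tilde T)/\eps$ dependence. For confidence, allot failure probability $\Theta(\beta/L^2)$ per level and union-bound, which (feeding these per-level $\beta$ and $\delta$ into the primitive's requirement) yields the $\log\tfrac{\Theta(L^2)}{\beta\delta}$ factor. For accuracy, split $\alpha$ as $\alpha_i=\Theta(\alpha/L)$ across levels so that the $(1-\alpha_i)$ factors multiply to at least $1-\alpha$, while the at-most-$8$ inflation of the required $\max Q(S,\cdot)$ per level compounds to $8^L\le 8^{\log^*\tilde T}$. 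Combining the three bounds shows that~\eqref{eq:largeQ} is precisely the condition under which the algorithm outputs, with probability $\ge 1-\beta$, an index $j$ with $Q(S,j)\ge(1-\alpha)r$.
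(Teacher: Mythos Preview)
The paper does not prove this proposition at all: it is quoted verbatim from \cite{BNS16a} and used as a black box. There is therefore no ``paper's own proof'' to compare against; the authors simply invoke the result and move on to constructing their Tukey-depth algorithm around it.

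Your sketch is a reasonable high-level account of how the construction in \cite{BNS16a} works (recursion of depth $\Theta(\log^*\tilde T)$, sensitivity-$1$ quasi-concave block-max scores, exponential mechanism at the leaves, and the three-way parameter accounting for privacy/confidence/accuracy). But note that you yourself flag the heart of the matter --- the domain-shrinking step from $\tilde T$ to $O(\log\tilde T)$ with only a constant-factor accuracy degradation --- and then say you will ``treat this step as given.'' That is precisely the nontrivial content of \cite{BNS16a}; without it you have not actually proved the proposition, only restated what its proof must accomplish and done the easy bookkeeping around it. If the intent is to use \cref{prop:aRecConcave} as a cited tool (as the paper does), that is fine; if the intent is to give a self-contained proof, the proposal as written has a gap exactly where the real work lies.
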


\remove{
We next present an alternative definition of quasi-concave functions that is more convenient to use.
\begin{observation}
Let $i_{\rm max}$ be a value such that $Q(i_{\rm max})$ is maximal.
A function $Q(Â·)$ is quasi-concave if and only if $Q(i) \leq Q(j)$ for all $i,j\in \tilde X$ such that $i < j < i_{\rm max}$ or $i_{\rm max} < j < i$.

\begin{enumerate}
\item
The function is non-decreasing  before $i_{\rm max}$, that is 
$Q(i) \leq Q(j)$ for every $i,j\in \tilde{X}$ such that $i < j < i_{\rm max}$, {\em and}
\item
The function is non-increasing after $i_{\rm max}$, that is 
$Q(i) \geq Q(j)$ for every $i,j\in \tilde{X}$ such that $i_{\rm max} < i < j$.
\end{enumerate}
\end{observation} 
}

\begin{claim}\label{c:minquasiconcave}
Let $\{f_t\}_{t\in\mathcal{T}}$ be a finite family of quasi-concave functions. Then, $f(x)=\min_{t\in \mathcal{T}}f_t(x)$ is also quasi-concave.
\end{claim}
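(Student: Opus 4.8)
The plan is to prove the claim directly from the definition of quasi-concavity (\cref{def:quasiConcave}), namely that a function $g$ is quasi-concave if $g(\ell) \geq \min\{g(i), g(j)\}$ for all $i < \ell < j$ in the (linearly ordered) domain. First I would fix arbitrary domain elements $i < \ell < j$ and set $g = f = \min_{t \in \T} f_t$. The goal is to show $f(\ell) \geq \min\{f(i), f(j)\}$.

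The key step is the following chain of inequalities. Since $\T$ is finite, the minimum defining $f(\ell)$ is attained: let $s \in \T$ be such that $f(\ell) = f_s(\ell)$. Because $f_s$ is quasi-concave and $i < \ell < j$, we have $f(\ell) = f_s(\ell) \geq \min\{f_s(i), f_s(j)\}$. Now $f_s(i) \geq \min_{t \in \T} f_t(i) = f(i)$ and likewise $f_s(j) \geq f(j)$, so $\min\{f_s(i), f_s(j)\} \geq \min\{f(i), f(j)\}$. Combining, $f(\ell) \geq \min\{f(i), f(j)\}$, which is exactly what we needed. (In fact one does not even need the minimum in $f(\ell)$ to be attained: for every $t$, $f_t(\ell) \geq \min\{f_t(i), f_t(j)\} \geq \min\{f(i),f(j)\}$, and taking the infimum over $t$ on the left gives $f(\ell) \geq \min\{f(i), f(j)\}$; finiteness of $\T$ only guarantees $f$ is well-defined as an actual minimum rather than an infimum.)

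There is essentially no obstacle here — the statement is a soft, purely order-theoretic fact and the proof is a two-line argument. The only thing to be careful about is not to confuse quasi-concavity with concavity: the minimum of concave functions need not be concave in the usual affine sense, but for the ``unimodality''-style notion used in \cref{def:quasiConcave} the argument above goes through verbatim, and finiteness of $\T$ is used only to ensure that $f$ is well-defined (and, if one wants, that the witnessing index $s$ exists). I would write the proof as a single short paragraph carrying out the displayed inequality chain.
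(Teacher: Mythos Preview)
Your proof is correct and is essentially the same as the paper's: the paper writes the chain $f(\ell)=\min_t f_t(\ell)\geq \min_t\min\{f_t(i),f_t(j)\}=\min\{f(i),f(j)\}$, which is precisely the ``for every $t$'' variant you mention in your parenthetical remark. The only cosmetic difference is that you first pick a minimizing index $s$ at $\ell$ rather than carrying the $\min_t$ through, but the underlying argument is identical.
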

\begin{proof}
Let $i \leq \ell  \leq j$. Then,
\begin{align*}
f(\ell) &=\min_{t\in \mathcal{T}}f_t(\ell)\\
        &\geq \min_{t\in\mathcal{T}}\{\min\{f_t(i),f_t(j)\}\} \tag{$f_t$ is quasi-concave, $\forall t\in\mathcal{T}$}\\
        &= \min\{\min_{t\in \mathcal{T}}f_t(i),\min_{t\in \mathcal{T}} f_t(j)\}\\
        &=\min\{f(i),f(j)\}.
\end{align*}
\end{proof}

\subsection{Halfspaces, Convex Hull, and Tukey Depth}\label{def:Tukey}

We next define the geometric objects we use in this paper.



\begin{definition}[Halfspaces and Hyperplanes]
Let $X \subset \R^d$. 
For $a_1,\dots,a_d,w\in \R$, let the halfspace
$\hs_{a_1,\dots,a_d,w}:X \rightarrow\{0,1\}$
be defined as $\hs_{a_1,\dots,a_d,w}(x_1,\dots,x_d)=1$ if and only if  $\sum_{i=1}^d a_i x_i \geq w$. Define the concept class $\halfspace(X) = \{\hs_{a_1,\dots,a_d,w}\}_{a_1,\dots,a_d,w \in \R}$.
We say that a halfspace $\hs$ contains a point $\pt{x} \in \R^d$ if $\hs(\pt{x})=1$.
The hyperplane $\hp_{a_1,\dots,a_d,w}$ defined by $a_1,\dots,a_d,w$ is the set of all points $\pt{x}=(x_1,\dots,x_d)$ such that $\sum_{i=1}^d a_i x_i = w$.
\end{definition}

%
\begin{definition}
Let $S\subset \R^d$ be a finite multiset of points. A point $\pt{x} \in \R^d$ is in the convex hull of $S$ if $\pt{x}$ is a convex combination of the elements of $S$, that is, there exists non-negative
numbers $\set{a_{\pt{y}}}_{\pt{y} \in S}$ such that
$\sum_{\pt{y} \in S} a_\pt{y} =1$ and
$\sum_{\pt{y} \in S} a_\pt{y} \pt{y}=\pt{x}$.
\end{definition}

We next define the Tukey median of a point, which is a generalization of a median to $\R^d$.
\begin{definition}[Tukey depth~\citep{Tuk75}]
Let $S\subset \R^d$ be a finite {multiset} 
of points. The Tukey depth of a point $\pt{x} \in \R^d$ with respect to $S$, denoted by $\td(\pt{x})$, is the minimum number of points in $S$ contained in a halfspace containing  the point $x$,
that is, 
$$\td(\pt{x})=\min_{\hs \in \halfspace_{d,T} ,\hs(\pt{x})=1}|\set{\pt{y}\in S:\hs(\pt{y})=1}|.$$
The Tukey median of $S$ is a point maximizing the Tukey depth. A centerpoint is a point of depth at least~$|S|/(d + 1)$. 
\end{definition}

\begin{observation}
\label{obs:Tukey}
The Tukey depth of a point is a sensitivity one function of the 
multiset $S$.
\end{observation}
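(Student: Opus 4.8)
The plan is to show directly that changing a single point of the multiset $S$ can change the Tukey depth of any fixed $\pt{x}\in\R^d$ by at most one. I would fix an arbitrary point $\pt{x}$ and two neighboring multisets $S_1,S_2$ that agree everywhere except that one occurrence of some $\pt{p}$ in $S_1$ is replaced by some $\pt{q}$ in $S_2$; I want to prove $|\td_{S_1}(\pt{x})-\td_{S_2}(\pt{x})|\le 1$. The natural route is to bound, for every halfspace $\hs$ with $\hs(\pt{x})=1$, the quantity $\bigl||\{\pt{y}\in S_1:\hs(\pt{y})=1\}|-|\{\pt{y}\in S_2:\hs(\pt{y})=1\}|\bigr|$, which is clearly at most $1$ since $S_1$ and $S_2$ differ in exactly one element (the counts can differ only according to whether $\hs$ contains $\pt{p}$ and whether it contains $\pt{q}$, contributing at most $\pm1$).

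From that pointwise bound I would conclude the statement by a minimization argument. Let $\hs^*$ be a halfspace achieving the minimum in the definition of $\td_{S_1}(\pt{x})$, i.e.\ $\hs^*(\pt{x})=1$ and $|\{\pt{y}\in S_1:\hs^*(\pt{y})=1\}|=\td_{S_1}(\pt{x})$. Then $\hs^*$ is a feasible halfspace in the minimization defining $\td_{S_2}(\pt{x})$, so
\begin{equation*}
\td_{S_2}(\pt{x})\le |\{\pt{y}\in S_2:\hs^*(\pt{y})=1\}|\le |\{\pt{y}\in S_1:\hs^*(\pt{y})=1\}|+1=\td_{S_1}(\pt{x})+1.
\end{equation*}
By symmetry (swapping the roles of $S_1$ and $S_2$) we also get $\td_{S_1}(\pt{x})\le\td_{S_2}(\pt{x})+1$, hence $|\td_{S_1}(\pt{x})-\td_{S_2}(\pt{x})|\le 1$, which is exactly the claim that $\td(\pt{x})$ is a sensitivity-one function of $S$.

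There is really no serious obstacle here; the only thing to be a little careful about is the multiset bookkeeping — ``differ in exactly one entry'' should be read as: there is a bijection between $S_1$ and $S_2$ that is the identity on all but one element, so that for any predicate (in particular $\hs(\cdot)=1$) the number of satisfying elements in $S_1$ and $S_2$ differ by at most one. Once that is pinned down, the feasibility-plus-minimization step above is immediate and symmetric. One could also phrase the whole argument with the equivalent ``remove $\le\ell$ points to escape the convex hull'' definition of Tukey depth, but the halfspace formulation already used in the definition makes the counting cleanest, so I would present it that way.
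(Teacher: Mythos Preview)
Your argument is correct. The paper itself offers no proof of this observation---it is simply stated as self-evident---so there is nothing to compare against; your halfspace-counting plus minimization argument is exactly the natural verification and is sound as written.
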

\remove{
\knote{this definition is not clear to me} We say that the direction $(a_1,\ldots,a_d)$
is in {\it general position} with respect to $S$
if the numbers $\sum_i a_iy_i$ are distinct
when $(a_1\ldots a_d)$ ranges over all points in $S$. \knote{should this be ``$(y_1\ldots y_d)$ ranges over all points in $S$''??}
\shay{The definition says that no two vectors in $S$ have the same inner product with $(a_1,\ldots, a_d)$. Would it be clearer if stated this way? 
I needed this definition in order to fix the sensitivity argument by Amos, but Uri wanted to use an alternative argument which does not require this definition and then we can remove it.}
Note that since $S$ is finite, a standard perturbation argument implies that\knote{this argument holds because the set $S$ is finite, right? if correct, let's mention this.}\shay{added.}
every halfspace $\hs$ can be written as
$\hs_{a_1,\ldots,a_d,w}$ where the direction
$(a_1,\ldots,a_d)$ is in general position with respect to $S$.
Let $\mathsf{GP}(S)\subseteq\mathbb{R}^d$ denote the set of all directions
that are in general position with respect to $S$.
}

\begin{claim}[Tukey depth, alternative definition]\label{c:tukeyalt}
Let $S\subset \R^d$ be a multiset of points. 
For a given $a_1,\dots,a_d\in \R$ define the function
\begin{equation}
\label{eq:tdef}
t_{a_1,\dots,a_d}(w)\triangleq\min\set{\left|\{(y_1,\dots,y_d)\in S: \sum_{i=1}^d a_i y_i \geq w\}\right|,\left|\{(y_1,\dots,y_d)\in S: \sum_{i=1}^d a_i y_i \leq w\}\right|}.
\end{equation}
Then, 
\begin{equation}
\label{eq:td}
\td(x_1,\dots,x_n)=\min_{(a_1,\dots,a_d)\in\R}t_{a_1,\dots,a_d}\left(\sum_{i=1}^d a_i x_i \right).
\end{equation}
\end{claim}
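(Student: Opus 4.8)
The claim is an equivalence between two formulas for Tukey depth: the geometric definition via halfspaces containing $\pt x$, and the coordinate/parametric formula in~\eqref{eq:td}. The natural route is to prove a two-sided inequality. First I would unwind the definition of $\td(\pt x)$ from \cref{def:Tukey}: $\td(\pt x)$ is the minimum, over halfspaces $\hs_{a_1,\dots,a_d,w}$ with $\hs_{a_1,\dots,a_d,w}(\pt x)=1$, of $|\{\pt y\in S: \hs_{a_1,\dots,a_d,w}(\pt y)=1\}|$. Writing out the halfspace condition, $\hs_{a_1,\dots,a_d,w}(\pt x)=1$ means $\sum_i a_i x_i\ge w$, and $|\{\pt y\in S:\hs(\pt y)=1\}| = |\{\pt y\in S:\sum_i a_i y_i\ge w\}|$. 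So
\[
\td(\pt x)=\min_{(a_1,\dots,a_d,w):\,\sum_i a_i x_i\ge w}\ \bigl|\{\pt y\in S:\textstyle\sum_i a_i y_i\ge w\}\bigr|.
\]

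**Key steps.** Step one ($\le$ direction): Fix any direction $(a_1,\dots,a_d)$ and set $w^\star=\sum_i a_i x_i$. Then both $\sum_i a_i x_i\ge w^\star$ and $\sum_i a_i x_i\le w^\star$ hold (with equality), so the halfspaces $\hs_{a_1,\dots,a_d,w^\star}$ and $\hs_{-a_1,\dots,-a_d,-w^\star}$ (the ``flipped'' halfspace, corresponding to $\sum_i a_i y_i\le w^\star$) both contain $\pt x$. Hence $\td(\pt x)$ is at most each of $|\{\pt y\in S:\sum_i a_i y_i\ge w^\star\}|$ and $|\{\pt y\in S:\sum_i a_i y_i\le w^\star\}|$, and therefore at most their minimum, which is exactly $t_{a_1,\dots,a_d}(w^\star)=t_{a_1,\dots,a_d}(\sum_i a_i x_i)$. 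Taking the minimum over all $(a_1,\dots,a_d)$ gives $\td(\pt x)\le \min_{(a_1,\dots,a_d)} t_{a_1,\dots,a_d}(\sum_i a_i x_i)$.

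Step two ($\ge$ direction): Take a halfspace $\hs_{a_1,\dots,a_d,w}$ witnessing $\td(\pt x)$, so $\sum_i a_i x_i\ge w$ and $|\{\pt y\in S:\sum_i a_i y_i\ge w\}|=\td(\pt x)$. Let $w^\star=\sum_i a_i x_i\ge w$. Since enlarging the threshold can only shrink the ``$\ge$'' set, $|\{\pt y\in S:\sum_i a_i y_i\ge w^\star\}|\le |\{\pt y\in S:\sum_i a_i y_i\ge w\}| = \td(\pt x)$. Consequently $t_{a_1,\dots,a_d}(w^\star)\le |\{\pt y\in S:\sum_i a_i y_i\ge w^\star\}|\le \td(\pt x)$, and so $\min_{(a_1,\dots,a_d)} t_{a_1,\dots,a_d}(\sum_i a_i x_i)\le \td(\pt x)$. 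Combining the two inequalities yields~\eqref{eq:td}.

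**Anticipated obstacle.** The only subtlety — and the one place to be careful — is the handling of boundary cases and of the ``flipped'' halfspace in the $\le$ direction: one must check that the set $\{\pt y:\sum_i a_i y_i\le w\}$ is genuinely realized as $\{\pt y:\hs(\pt y)=1\}$ for a legitimate halfspace (namely $\hs_{-a_1,\dots,-a_d,-w}$), and that points lying exactly on the hyperplane are counted consistently on both sides of~\eqref{eq:tdef}. Everything else is a direct, essentially bookkeeping, rewriting of the definitions; there is no genuine mathematical difficulty beyond keeping track of weak versus strict inequalities. I would also note that the case $(a_1,\dots,a_d)=(0,\dots,0)$ is harmless: it contributes $t_{\vec 0}(0)=\min\{|S|,|S|\}=|S|$ to the minimum on the right-hand side, which never affects the value since $\td(\pt x)\le |S|$ always.
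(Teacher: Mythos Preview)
Your argument is correct and is the standard, essentially definitional, verification of the equivalence; the two inequalities via ``tighten the threshold to $w^\star=\sum_i a_i x_i$'' and ``flip the direction to realize the $\le$-side'' are exactly the right moves, and your remarks about boundary points and the degenerate direction $\vec 0$ close the only possible loose ends. The paper itself states \cref{c:tukeyalt} without proof (treating it as a routine reformulation of \cref{def:Tukey}), so there is nothing further to compare against.
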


\begin{claim}[Tukey depth, another alternative definition]\label{c:tukeyalt2}
Let $S\subset \R^d$ be a multiset of points. 
The Tukey-depth of a point $\pt{x}$ is the size of the smallest set $A \subseteq S$ such that $\pt{x}$ is not in the convex-hull of $S \setminus A$. 
\end{claim}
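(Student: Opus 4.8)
The plan is to show the two quantities are equal by establishing inequalities in both directions, relying on the original definition of Tukey depth (the minimum over halfspaces $\hs$ containing $\pt{x}$ of $|\{\pt{y}\in S:\hs(\pt{y})=1\}|$) and on the equivalent characterization in \cref{c:tukeyalt}. Write $\td(\pt{x})$ for the Tukey depth and let $k(\pt{x})$ denote the size of the smallest $A\subseteq S$ with $\pt{x}\notin\operatorname{conv}(S\setminus A)$; we want $\td(\pt{x})=k(\pt{x})$.

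First I would prove $k(\pt{x})\le\td(\pt{x})$. Take a halfspace $\hs$ with $\hs(\pt{x})=1$ that achieves the minimum in the definition of $\td$, and set $A=\{\pt{y}\in S:\hs(\pt{y})=1\}$, so $|A|=\td(\pt{x})$. Then every point of $S\setminus A$ lies in the complementary open halfspace $\{\pt{z}:\sum_i a_i z_i<w\}$, which is convex and hence contains $\operatorname{conv}(S\setminus A)$; since $\pt{x}$ satisfies $\sum_i a_i x_i\ge w$, it is not in this open halfspace, so $\pt{x}\notin\operatorname{conv}(S\setminus A)$. Thus $k(\pt{x})\le|A|=\td(\pt{x})$.

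For the reverse inequality $\td(\pt{x})\le k(\pt{x})$, let $A$ be a minimizer with $\pt{x}\notin\operatorname{conv}(S\setminus A)$ and $|A|=k(\pt{x})$. Since $\pt{x}$ is not in the closed convex hull of the finite set $S\setminus A$, the separating hyperplane theorem gives a direction $(a_1,\dots,a_d)$ and a scalar $w$ with $\sum_i a_i x_i\ge w$ (in fact we can take strict separation, but $\ge$ suffices) and $\sum_i a_i y_i<w$ for every $\pt{y}\in S\setminus A$. Consider the halfspace $\hs_{a_1,\dots,a_d,w}$; it contains $\pt{x}$, and the only points of $S$ it can contain are those in $A$, so $|\{\pt{y}\in S:\hs(\pt{y})=1\}|\le|A|=k(\pt{x})$, giving $\td(\pt{x})\le k(\pt{x})$. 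Combining the two inequalities yields the claim.

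The only delicate point is the direction of the inequalities at the boundary (whether a point of $S$ on the separating hyperplane is counted as ``in'' the halfspace): in the first direction this is harmless because membership in $A$ is defined exactly by $\hs$, and in the second direction strict separation of $\pt{x}$ from the finite set $S\setminus A$ can be arranged so that no point of $S\setminus A$ lies on the hyperplane. I expect this boundary bookkeeping to be the main thing to get right; the geometric content is just the separating hyperplane theorem applied to a point outside the convex hull of a finite point set.
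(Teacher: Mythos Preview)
Your argument is correct. The paper does not actually supply a proof of \cref{c:tukeyalt2}; it is stated as a known characterization and left unproved, so there is no ``paper's proof'' to compare against. Your two-inequality argument via the separating hyperplane theorem is the standard way to establish this equivalence, and the boundary bookkeeping you flag is handled exactly as you describe: in the first direction the set $A$ is defined by the closed halfspace so $S\setminus A$ lies in the open complement, and in the second direction strict separation of $\pt{x}$ from the compact set $\operatorname{conv}(S\setminus A)$ (when nonempty) keeps $S\setminus A$ off the hyperplane. The degenerate case $S\setminus A=\emptyset$ (i.e., $A=S$) causes no trouble since then $k(\pt{x})=|S|\ge\td(\pt{x})$ trivially.
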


\begin{claim}[\cite{Yaglom61book,Edelsbrunner87book}]\label{c:centerpoint}
Let $S\subset \R^d$ be a multiset of points. 
There exists $\pt{x} \in \R^d$ such that $\td(\pt{x}) \geq |S|/(d + 1)$. 
\end{claim}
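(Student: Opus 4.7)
The plan is to prove \cref{c:centerpoint} via Helly's theorem, following the classical argument. Set $n=|S|$. Instead of directly hunting for a point of large Tukey depth, I would set up a finite family of convex sets whose common intersection is automatically a centerpoint, and then verify the Helly hypothesis.

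Concretely, I would define
$$\mathcal{F} = \set{\mathrm{conv}(T): T\subseteq S,\ |T| > \tfrac{d}{d+1}n }.$$
This is a finite family of convex subsets of $\R^d$. The first step is to show that any $d+1$ of its members have a common point. Given $T_1,\dots,T_{d+1}$ with each $|T_i|>\frac{d}{d+1}n$, an inclusion-exclusion count in $S$ gives
$$\Bigl| \bigcap_{i=1}^{d+1} T_i \Bigr|\ \geq\ \sum_{i=1}^{d+1}|T_i|\ -\ d\cdot n\ >\ (d+1)\cdot\tfrac{d}{d+1}n - dn\ =\ 0,$$
so the $T_i$ share a point in $S$, and hence so do their convex hulls. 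Applying Helly's theorem to the finite family $\mathcal{F}$, I obtain a point $\pt{x}\in \bigcap \mathcal{F}$.

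The second step is to verify that this $\pt{x}$ has $\td(\pt{x})\geq n/(d+1)$. Suppose otherwise: there is a closed halfspace $\hs=\hs_{a_1,\dots,a_d,w}$ with $\hs(\pt{x})=1$ and $|\{\pt{y}\in S:\hs(\pt{y})=1\}|<n/(d+1)$. Let $H'=\{\pt{z}\in\R^d:\sum a_i z_i<w\}$ be the open complementary halfspace, and set $T=H'\cap S$. Since $|T|=n-|\{\pt{y}\in S:\hs(\pt{y})=1\}|>\frac{d}{d+1}n$, we have $\mathrm{conv}(T)\in\mathcal{F}$, so $\pt{x}\in\mathrm{conv}(T)$. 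But $\mathrm{conv}(T)\subseteq H'$ because $H'$ is convex and contains $T$, giving $\pt{x}\in H'$, contradicting $\hs(\pt{x})=1$. (The minor integrality issue—that $|H\cap S|<n/(d+1)$ forces $|H'\cap S|>dn/(d+1)$ strictly—holds because $|H\cap S|$ is an integer.)

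There is no real obstacle here beyond choosing the right family: the only subtle point is to pick convex hulls of \emph{heavy} subsets of $S$ (rather than, say, halfspaces directly), which simultaneously keeps $\mathcal{F}$ finite so that the finite-family version of Helly's theorem applies, and makes the final contradiction immediate via $\mathrm{conv}(T)\subseteq H'$.
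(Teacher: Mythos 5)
The paper does not prove \cref{c:centerpoint}; it cites it as the classical centerpoint theorem from \cite{Yaglom61book,Edelsbrunner87book}. Your Helly-based argument is the standard textbook proof of that theorem, and it is correct: the family $\mathcal{F}$ of convex hulls of heavy sub(multi)sets is finite, the counting bound $\lvert\bigcap_{i=1}^{d+1}T_i\rvert\geq\sum_i\lvert T_i\rvert-dn>0$ gives the Helly hypothesis, and the final contradiction via $\mathrm{conv}(T)\subseteq H'$ is clean (the integrality remark is in fact unnecessary, since $\lvert H\cap S\rvert<n/(d+1)$ already gives $\lvert H'\cap S\rvert>dn/(d+1)$ strictly). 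The only point worth flagging is that $S$ is a multiset, so ``subsets'' $T$ and their cardinalities should be read as sub-multisets counted with multiplicity, consistent with the definition of Tukey depth; the argument goes through verbatim under that reading.
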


Thus, a centerpoint always exists and a Tukey median must be a centerpoint. However, not every centerpoint is a Tukey median. 
We will use the following regarding the set of points of all points whose Tukey depth is at least $r$.
\begin{fact}[see e.g.~\cite{Xiaohui14Tukey}]\label{fact:counting}
Let $S\subseteq \R^d$ be a multiset of points and $r > 0$. 
Define $\T(r) = \{\pt{x}\in\mathbb{R}^d : \td(\pt{x}) \geq r\}$.
Then $\T(r)$ is a polytope whose faces are supported by affine subspaces 
that are spanned by points from~$S$.
\end{fact}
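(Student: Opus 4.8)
Proof plan for Fact~\ref{fact:counting} (that $\T(r) = \{\pt{x} : \td(\pt{x}) \ge r\}$ is a polytope with faces supported by affine subspaces spanned by points of $S$).

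The plan is to first establish that $\T(r)$ is the intersection of halfspaces, so it is a (closed, convex) polyhedron, and then argue it is bounded, hence a polytope; finally I would identify the supporting affine subspaces of its faces. For the first part I would use \cref{c:tukeyalt}: writing $\td(\pt{x}) = \min_{(a_1,\dots,a_d)} t_{a_1,\dots,a_d}(\sum_i a_i x_i)$, observe that $\td(\pt{x}) \ge r$ iff for \emph{every} direction $\pt{a} = (a_1,\dots,a_d)$ the value $t_{\pt a}(\langle \pt a, \pt x\rangle) \ge r$, i.e.\ every halfspace through $\pt x$ contains at least $r$ points of $S$. Equivalently, $\pt x \notin \T(r)$ iff there is an open halfspace containing $\pt x$ and at most $r-1$ points of $S$, i.e.\ a halfspace whose complement contains all but at most $r-1$ points. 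Now the key combinatorial observation: although there are infinitely many directions, the family of subsets $\{\pt y \in S : \langle \pt a, \pt y\rangle \ge w\}$ obtained as $(\pt a, w)$ ranges over all halfspaces is \emph{finite} — it is bounded by the number of cells in the arrangement of the $|S|$ dual hyperplanes, which is $O(|S|^{d+1})$. Hence $\T(r)$ is the intersection of finitely many halfspaces $\{\pt x : \langle \pt a, \pt x\rangle \le w\}$, one for each "bad" subset of size $\le r-1$ that is cut off by a halfspace; among these we may take, for each such subset $A$, a halfspace that strictly separates $S \setminus A$ from... more precisely, we take the halfspaces defining $\mathbb{R}^d \setminus (\text{convex hull of } S\setminus A)$ complement appropriately. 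This shows $\T(r)$ is a polyhedron, and by \cref{c:tukeyalt2} combined with \cref{c:centerpoint} (taking $r \le |S|/(d+1)$; for larger $r$ the set is empty and the claim is vacuous) it is nonempty and — since for any direction the extreme points of $S$ in that direction have depth $0 < r$ — bounded, hence a polytope.

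Next I would pin down the supporting subspaces. Each facet of $\T(r)$ lies on the boundary of one of the defining halfspaces $H$, and I want to show $H$ can be chosen so that its bounding hyperplane passes through $d$ affinely independent points of $S$. The idea: a point $\pt x$ lies on the boundary of $\T(r)$ precisely when $\td(\pt x) = r$ and there is a "tight" direction — a halfspace through $\pt x$ containing exactly $r$ points of $S$, with the hyperplane separating those $r$ points from the rest. As we move $\pt x$ along a facet, the combinatorial type (the set of $r$ points being cut off) is constant, and the hyperplane realizing the minimum can be rotated/translated until it is supported by $d$ points of $S$ (the generic position of a facet forces the witnessing hyperplane to be "pinned" by $d$ points; otherwise one could perturb and decrease the depth on one side). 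So the affine hull of each facet of $\T(r)$ is contained in the affine span of $d$ points of $S$ — equivalently, faces of $\T(r)$ are supported by affine subspaces spanned by subsets of $S$. I would make this precise via the duality between points and hyperplanes and the observation that in the dual arrangement, the boundary of $\T(r)$ corresponds to a level of the arrangement, whose breakpoints occur exactly at flats spanned by $\le d$ dual hyperplanes, i.e.\ primal subspaces spanned by $\le d$ points of $S$.

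The main obstacle is the second part — rigorously showing the facets are supported by subspaces spanned by points of $S$, rather than merely showing $\T(r)$ is a polytope. The cleanest route is the arrangement/levels argument (this is why the statement cites \cite{Xiaohui14Tukey}): the function $\pt x \mapsto \td(\pt x)$ is piecewise constant, and its "level sets from above" change only when $\pt x$ crosses a hyperplane spanned by $d$ points of $S$; consequently every face of every such level set lies in one of these $\binom{|S|}{d}$ hyperplanes. I expect that a self-contained argument would invoke the fact that the Tukey depth function, restricted to any line $\ell$, is a unimodal step function whose breakpoints are intersections of $\ell$ with hyperplanes through $d$ points of $S$ — and then conclude that the boundary of $\T(r)$, being a union of pieces of such hyperplanes, has the claimed form. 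For the purposes of this paper it suffices to cite \cref{fact:counting} as known, so I would keep the proof short: establish polyhedrality and boundedness from \cref{c:tukeyalt}, and then invoke the arrangement structure for the statement about supporting subspaces.
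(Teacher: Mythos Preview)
The paper does not prove \cref{fact:counting}; it states it as a known fact with a citation to \cite{Xiaohui14Tukey}, so there is no in-paper argument to compare against.  Your own closing sentence already anticipates this.

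That said, your sketch takes a longer road than necessary.  A cleaner route, available directly from \cref{c:tukeyalt2}, is to note that $\td(\pt{x})\ge r$ holds iff $\pt{x}\in\operatorname{conv}(S\setminus A)$ for every $A\subseteq S$ with $\lvert A\rvert\le r-1$; hence
\[
\T(r)=\bigcap_{A\subseteq S,\;\lvert A\rvert=r-1}\operatorname{conv}(S\setminus A),
\]
a finite intersection of polytopes.  This immediately gives that $\T(r)$ is a polytope (bounded, since $\T(r)\subseteq\T(1)=\operatorname{conv}(S)$), and each of its facets lies in a facet of some $\operatorname{conv}(S\setminus A)$, which by definition is supported by an affine hyperplane spanned by points of $S$.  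Lower-dimensional faces are then intersections of such facets.  This avoids the dual-arrangement bookkeeping you outline and makes the ``which halfspaces'' step---where your sketch becomes vague (``more precisely, we take the halfspaces defining\ldots'')---unnecessary.  Your boundedness argument via extreme points is also slightly off (points of $S$ have depth at least $1$, not $0$); the containment $\T(r)\subseteq\operatorname{conv}(S)$ is the direct way.
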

So, for example the set of all Tukey medians is a polytope and if it is $d$-dimensional then each of its facet is
supported by a hyperplane that passes through $d+1$ points from $S$.

\section{Finding a Point in the Convex Hull}
\label{sec:FindingPoint}

Our goal is to privately find a point in the convex hull of a set of input points (i.e., the database). We will actually achieve a stronger task and find a point whose Tukey depth is at least $|S|/2(d+1)$ (provided that $|S|$ is large enough). Observe that $\pt{x}$ is in the convex hull of $S$ if and only if $\td(\pt{x}) >0$. As we mentioned in the introduction, finding a point whose Tukey depth is high results in a better learning algorithms for halfspaces.

The idea of our algorithm is to find the point $\pt{x}=(x_1,\dots,x_d)$ coordinate after coordinate: we
use $\AlgRecConcave$ to find a value $x^*_1$ that can be extended by some $x_2,\ldots,x_d$ so that the depth of $(x_1^*,x_2\ldots,x_d)$ is close to the depth of the Tukey median, then we find a value $x_2^*$ so that there is a point $(x^*_1,x_2^*,x_3\dots,x_d)$ whose depth is close to the depth of the Tukey median, and so forth until we find all coordinates.  The parameters in $\AlgRecConcave$ are set such that in each step we lose depth of at most $n/2(d+1)^2$ compared to the Tukey median, resulting in a point  $(x^*_1,\dots,x^*_d)$ whose depth is at most $n/2(d+1)$ less than the depth of the Tukey median, i.e., its depth is at least $n/2(d+1)$.    

\subsection{Defining a Quasi-Concave Function}

To apply the above approach, we need to prove that the functions considered in the algorithm $\AlgRecConcave$
are quasi-concave.
\begin{definition}
For every $1 \leq i \leq d$ and every $x^*_1,\dots,x^*_{i-1} \in \R$, define
$$Q_{x^*_1,\dots,x^*_{i-1}}(x_i)\triangleq \max_{x_{i+1},\dots,x_d \in \R} \td (x^*_1,\dots,x^*_{i-1}, x_i,\dots,x_d).$$
\end{definition}



We next prove that $Q_{x^*_1,\dots,x^*_{i-1}}(x_i)$ is quasi-concave. Towards this goal, 
we first prove that the function~$t_{a_1,\dots,a_d}(w)$, defined in \cref{eq:tdef}, is quasi-concave.

\begin{claim}
\label{cl:t-concave}
For every  $a_1,\dots,a_{d} \in \R$,  the function
$t_{a_1,\dots,a_{d}}(w)$ is quasi-concave.
\end{claim}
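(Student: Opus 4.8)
The plan is to fix $a_1,\dots,a_d \in \R$ and show that $t_{a_1,\dots,a_d}(w)$, as a function of $w \in \R$, is quasi-concave in the sense of \cref{def:quasiConcave}. Write $P(w) = |\{\pt{y}\in S : \sum_i a_i y_i \geq w\}|$ and $N(w) = |\{\pt{y}\in S : \sum_i a_i y_i \leq w\}|$, so that $t_{a_1,\dots,a_d}(w) = \min\{P(w), N(w)\}$. The key structural observations are that $P$ is non-increasing in $w$ (enlarging the threshold $w$ can only shrink the set of points on the $\geq$ side) and that $N$ is non-decreasing in $w$ (enlarging $w$ can only grow the $\leq$ side). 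Both claims are immediate from the definitions: if $w \le w'$ then $\sum_i a_i y_i \ge w'$ implies $\sum_i a_i y_i \ge w$, and dually for $N$.

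Given this, I would argue directly from the definition of quasi-concavity: take any $w_1 < w < w_2$ and show $t(w) \geq \min\{t(w_1), t(w_2)\}$. Since $N$ is non-decreasing, $N(w) \geq N(w_1) \geq t(w_1)$. Since $P$ is non-increasing, $P(w) \geq P(w_2) \geq t(w_2)$. Hence $t(w) = \min\{P(w), N(w)\} \geq \min\{P(w_2), N(w_1)\} \geq \min\{t(w_2), t(w_1)\}$, which is exactly what is needed. (Equivalently, one could phrase it as: a non-increasing function and a non-decreasing function are each trivially quasi-concave, and \cref{c:minquasiconcave} gives that their pointwise minimum is quasi-concave; but since $\{f_t\}$ there is indexed by an arbitrary finite set and here we just have two functions, the direct three-line argument is cleanest and self-contained.)

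There is essentially no main obstacle here — the only thing to be slightly careful about is the direction of the monotonicities and making sure the chained inequalities in the middle step pair up $P$ with $w_2$ and $N$ with $w_1$ correctly, rather than the other way around (which would not give a valid bound). I would also note in passing that the claim holds for all real $w$, not just at values realized as $\sum_i a_i x_i$ for $\pt{x}\in S$, which is what will be needed when this is fed into \cref{prop:aRecConcave} in the subsequent argument establishing quasi-concavity of $Q_{x^*_1,\dots,x^*_{i-1}}$.
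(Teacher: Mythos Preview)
Your proposal is correct and essentially identical to the paper's proof: the paper defines the same two functions (calling them $f_1,f_2$), observes they are monotone and hence quasi-concave, and then invokes \cref{c:minquasiconcave} to conclude that their minimum $t_{a_1,\dots,a_d}$ is quasi-concave. Your direct three-line verification simply unpacks that last step by hand, and you even note the \cref{c:minquasiconcave} route as an alternative, so there is no meaningful difference.
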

\begin{proof}
Define
$
f_1(w) = \bigl\lvert\{(y_1,\dots,y_d)\in S: \sum_{i=1}^d a_i y_i \geq w\}\bigr\rvert,$ and 
$f_2(w) = \bigl\lvert\{(y_1,\dots,y_d)\in S: \sum_{i=1}^d a_i y_i \leq w\}\bigr\rvert$. Note that $t_{a_1,\dots,a_{d}}(w) = \min\{{f_1(w),f_2(w)}\}$.
These functions count the number of points in $S$ on the two (closed) sides of the hyperplane $\hp_{a_1\ldots a_d,w}$.
The claim follows by Claim~\ref{c:minquasiconcave} since both $f_1,f_2$ are quasi-concave
(in fact, both are monotone).
\end{proof}

We next prove that the restriction of the Tukey depth function to a line is quasi-concave.
This lemma is implied by Fact~\ref{fact:counting} (implying that the set of points whose Tukey depth is at least $r$ is convex). For completeness, we supply a full proof of the claim.

\begin{claim}
\label{cl:tdl-concave}
Fix $\alpha_1,\dots,\alpha_d,\beta_1,\dots,\beta_d \in \R$, and define
$\tdl_{\alpha_1,\dots,\alpha_d,\beta_1,\dots,\beta_d }(t)=\td(\alpha_1 t+\beta_1,\dots,\alpha_d t+\beta_d )$.
The function
$\tdl_{\alpha_1,\dots,\alpha_d,\beta_1,\dots,\beta_d }$ is  quasi-concave.
\end{claim}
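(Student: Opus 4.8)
The plan is to reduce Claim~\ref{cl:tdl-concave} to the already-established Claims~\ref{c:minquasiconcave} and~\ref{cl:t-concave} by using the ``alternative definition'' of Tukey depth from Claim~\ref{c:tukeyalt}. Recall that claim expresses
$\td(\pt{x}) = \min_{(a_1,\dots,a_d)\in\R^d} t_{a_1,\dots,a_d}\bigl(\sum_{i=1}^d a_i x_i\bigr)$.
Substituting the parametrized line $\pt{x}(t)=(\alpha_1 t + \beta_1,\dots,\alpha_d t + \beta_d)$ gives
$\tdl_{\alpha_1,\dots,\alpha_d,\beta_1,\dots,\beta_d}(t) = \min_{(a_1,\dots,a_d)\in\R^d} t_{a_1,\dots,a_d}\Bigl(\sum_{i=1}^d a_i (\alpha_i t + \beta_i)\Bigr) = \min_{(a_1,\dots,a_d)} t_{a_1,\dots,a_d}\bigl( c_a t + e_a\bigr)$, where $c_a = \sum_i a_i\alpha_i$ and $e_a = \sum_i a_i\beta_i$ are constants depending only on the fixed coefficients. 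So for each fixed direction $a=(a_1,\dots,a_d)$, define $g_a(t) \triangleq t_{a_1,\dots,a_d}(c_a t + e_a)$; then $\tdl(t) = \min_a g_a(t)$.

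First I would observe that each $g_a$ is quasi-concave: it is the composition of the quasi-concave function $t_{a_1,\dots,a_d}(\cdot)$ (Claim~\ref{cl:t-concave}) with the affine map $t \mapsto c_a t + e_a$. A quasi-concave function of one real variable precomposed with an affine function of one real variable is again quasi-concave — this holds whether $c_a>0$, $c_a<0$, or $c_a=0$ (in the last case $g_a$ is constant), since an affine bijection (or constant map) of $\R$ preserves the betweenness relation $i<\ell<j$ used in Definition~\ref{def:quasiConcave}; I would state this as a one-line sub-observation rather than belabor it. Then, since $\tdl$ is the minimum of the family $\{g_a\}$, Claim~\ref{c:minquasiconcave} immediately gives that $\tdl$ is quasi-concave, completing the proof.

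The one genuine subtlety — and the main obstacle — is that Claim~\ref{c:minquasiconcave} as stated applies to a \emph{finite} family of quasi-concave functions, whereas the index set $\{(a_1,\dots,a_d)\in\R^d\}$ here is a priori infinite (in fact uncountable). To handle this I would note that for the purpose of computing $\td$ of points on the given line, only finitely many directions are relevant: the value $t_{a}(w)$ depends only on which side of the hyperplane $\{\sum a_i y_i = w\}$ each of the finitely many points of $S$ lies on, so as $(a,w)$ ranges over all of $\R^{d+1}$ only finitely many ``combinatorial types'' of partitions of $S$ arise, and hence $\td$ restricted to the line is realized as a minimum over a finite subfamily of the $g_a$'s. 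Alternatively, and perhaps more cleanly, I would invoke Fact~\ref{fact:counting}: for every $r$ the set $\T(r)=\{\pt{x} : \td(\pt{x})\ge r\}$ is a polytope, hence convex, so its intersection with the line is an interval; since $\{t : \tdl(t)\ge r\}$ is an interval for every $r$, the function $\tdl$ is quasi-concave by definition. The excerpt explicitly says a self-contained proof is desired ``for completeness,'' so I would present the reduction via Claims~\ref{c:tukeyalt}, \ref{cl:t-concave}, and \ref{c:minquasiconcave} as the main argument, dispatching the finiteness point with the combinatorial-types remark, and perhaps mention the Fact~\ref{fact:counting} route as the conceptual reason.
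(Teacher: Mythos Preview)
Your approach is correct and conceptually identical to the paper's; the only difference is packaging. The paper does not invoke Claim~\ref{c:minquasiconcave} at all: given $t_0<t_1<t_2$, it picks a single direction $a$ that \emph{achieves} the minimum in $\td$ at the middle point $t_1$, so that $\tdl(t_1)=g_a(t_1)$; then quasi-concavity of $g_a$ (exactly your observation that $t_a$ precomposed with an affine map is quasi-concave) gives $g_a(t_1)\geq\min\{g_a(t_0),g_a(t_2)\}$, and finally $g_a(t_0)\geq\tdl(t_0)$ and $g_a(t_2)\geq\tdl(t_2)$ since $\tdl$ is the infimum over all directions. This is precisely the standard proof that an infimum of quasi-concave functions is quasi-concave, unrolled by hand.

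Your ``main obstacle'' is therefore a non-issue, and your two workarounds (finitely many combinatorial types; or invoking Fact~\ref{fact:counting}) are correct but unnecessary. The finiteness hypothesis in Claim~\ref{c:minquasiconcave} is inessential: the same one-line proof works for an arbitrary family with $\min$ replaced by $\inf$ (for each index $t$, $f_t(\ell)\geq\min\{f_t(i),f_t(j)\}\geq\min\{f(i),f(j)\}$, then take $\inf_t$ on the left). The paper's direct argument simply sidesteps the question by never naming the claim.
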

\begin{proof}
Let $t_0 < t_1 < t_2$ and let $(a_1,\dots,a_d)$ be a direction that minimizes $\td(\alpha_1 t_1+\beta_1,\dots,\alpha_d t_1+\beta_d )$ in~(\ref{eq:td}),
i.e.,
$\tdl_{\alpha_1,\dots,\alpha_d,\beta_1,\dots,\beta_d }(t_1)=\td(\alpha_1 t_1+\beta_1,\dots,\alpha_d t_1+\beta_d )=t_{a_1,\dots,a_d}\left(\sum_{i=1}^d a_i (\alpha_i t_1+\beta_i) \right).$
Consider the function which maps $t$ to $\sum_{i=1}^d a_i (\alpha_i t +\beta_i)= (\sum_{i=1}^d a_i \alpha_i)t+\sum_{i=1}^d a_i \beta_i$. This function is either increasing or decreasing, thus, by Claim~\ref{cl:t-concave},
\begin{eqnarray*}
\tdl_{\alpha_1,\dots,\alpha_d,\beta_1,\dots,\beta_d }(t_1) & = &t_{a_1,\dots,a_d}\left(\sum_{i=1}^d a_i (\alpha_i t_1+\beta_i) \right)\\
& \geq &
\min\set{t_{a_1,\dots,a_d}\left(\sum_{i=1}^d a_i (\alpha_i t_0 +\beta_i) \right),
         t_{a_1,\dots,a_d}\left(\sum_{i=1}^d a_i (\alpha_i t_2+\beta_i) \right)} \\
& \geq &
\min\set{\td( \alpha_1 t_0 +\beta_1,\dots, \alpha_d t_0 +\beta_d),
         \td( \alpha_1 t_2 +\beta_1,\dots, \alpha_d t_2 +\beta_d)} \\
& = &
\min\set{\tdl_{\alpha_1,\dots,\alpha_d,\beta_1,\dots,\beta_d }(t_0),\tdl_{\alpha_1,\dots,\alpha_d,\beta_1,\dots,\beta_d }(t_1)}.
\end{eqnarray*}
\end{proof}
\begin{lemma}
For every $1 \leq i \leq d$ and every $x^*_1,\dots,x^*_{i-1} \in \R$, the function
$Q_{x^*_1,\dots,x^*_{i-1}}(x_i)$ is a quasi-concave function. Furthermore, $Q_{x^*_1,\dots,x^*_{i-1}}(x_i)$ is a sensitivity $1$ function of the multiset $S$.
\end{lemma}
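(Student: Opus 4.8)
The plan is to prove the two assertions of the Lemma separately, deriving each from a claim already established in the excerpt.

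\textbf{Quasi-concavity.} First I would show that $Q_{x^*_1,\dots,x^*_{i-1}}$ is quasi-concave by reducing it to the one-dimensional statement of \cref{cl:tdl-concave}. Fix three values $x_i^{(0)} < x_i^{(1)} < x_i^{(2)}$; I must show $Q_{x^*_1,\dots,x^*_{i-1}}(x_i^{(1)}) \geq \min\{Q_{x^*_1,\dots,x^*_{i-1}}(x_i^{(0)}), Q_{x^*_1,\dots,x^*_{i-1}}(x_i^{(2)})\}$. By definition of $Q$ as a max over the free coordinates, pick $(x_{i+1}^{(0)},\dots,x_d^{(0)})$ and $(x_{i+1}^{(2)},\dots,x_d^{(2)})$ achieving the maxima at $x_i^{(0)}$ and $x_i^{(2)}$ respectively, so that $Q_{x^*_1,\dots,x^*_{i-1}}(x_i^{(0)}) = \td(x^*_1,\dots,x^*_{i-1},x_i^{(0)},x_{i+1}^{(0)},\dots,x_d^{(0)})$ and similarly for $x_i^{(2)}$. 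Now consider the line in $\R^d$ through the two points $(x^*_1,\dots,x^*_{i-1},x_i^{(0)},x_{i+1}^{(0)},\dots,x_d^{(0)})$ and $(x^*_1,\dots,x^*_{i-1},x_i^{(2)},x_{i+1}^{(2)},\dots,x_d^{(2)})$, parametrized affinely as $t \mapsto (\alpha_1 t + \beta_1, \dots, \alpha_d t + \beta_d)$ with the two endpoints at parameters $t_0 < t_2$; note that since the first $i-1$ coordinates agree at both endpoints, $\alpha_1 = \dots = \alpha_{i-1} = 0$, and since $x_i^{(0)} < x_i^{(2)}$, the $i$-th coordinate along the line is strictly monotone, so there is a parameter $t_1 \in (t_0,t_2)$ where the $i$-th coordinate equals exactly $x_i^{(1)}$. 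The point at $t_1$ has the form $(x^*_1,\dots,x^*_{i-1}, x_i^{(1)}, z_{i+1},\dots,z_d)$ for some $z_{i+1},\dots,z_d$, hence its Tukey depth is at most $Q_{x^*_1,\dots,x^*_{i-1}}(x_i^{(1)})$ by definition of $Q$ as a maximum. Applying \cref{cl:tdl-concave} to this line at $t_0 < t_1 < t_2$ gives $\td(\text{point at } t_1) \geq \min\{\td(\text{point at }t_0), \td(\text{point at }t_2)\} = \min\{Q_{x^*_1,\dots,x^*_{i-1}}(x_i^{(0)}), Q_{x^*_1,\dots,x^*_{i-1}}(x_i^{(2)})\}$, and chaining the two inequalities finishes the argument.

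\textbf{Sensitivity $1$.} For the second assertion I would argue directly: by \cref{obs:Tukey}, for any fixed point $\pt{z}\in\R^d$ the map $S \mapsto \td_S(\pt{z})$ changes by at most $1$ between neighboring multisets. Since $Q_{x^*_1,\dots,x^*_{i-1}}(x_i)$ is a pointwise supremum (over the free coordinates $x_{i+1},\dots,x_d$) of such functions, and a supremum of $1$-sensitive functions is $1$-sensitive — for any neighboring $S,S'$ and any $\pt{z}$ extending $(x^*_1,\dots,x^*_{i-1},x_i)$ we have $\td_S(\pt{z}) \leq \td_{S'}(\pt{z}) + 1 \leq Q^{S'}_{x^*_1,\dots,x^*_{i-1}}(x_i) + 1$; taking the supremum over $\pt{z}$ on the left gives $Q^{S}_{x^*_1,\dots,x^*_{i-1}}(x_i) \leq Q^{S'}_{x^*_1,\dots,x^*_{i-1}}(x_i) + 1$, and symmetrically — the claim follows. (One should note the supremum is actually attained, or at least that this does not matter for sensitivity; by \cref{fact:counting} the depth takes finitely many values, so the max is well-defined.)

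\textbf{Main obstacle.} The only delicate point is the quasi-concavity reduction: one must be careful that the line connecting the two optimal witness points genuinely passes through a point whose first $i-1$ coordinates are $x^*_1,\dots,x^*_{i-1}$ and whose $i$-th coordinate is exactly $x_i^{(1)}$ — this is what forces $\alpha_1=\dots=\alpha_{i-1}=0$ and uses the strict ordering $x_i^{(0)} < x_i^{(1)} < x_i^{(2)}$ to locate $t_1$ strictly between $t_0$ and $t_2$. Everything else is bookkeeping. I do not anticipate needing any tool beyond \cref{cl:tdl-concave}, \cref{obs:Tukey}, and \cref{fact:counting}.
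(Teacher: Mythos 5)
Your proposal is correct and matches the paper's argument essentially step for step: both prove quasi-concavity by connecting the two maximizing witness points with a line, observing that this line has constant first $i-1$ coordinates and monotone $i$-th coordinate (the paper just re-parametrizes so the $i$-th coordinate \emph{is} the parameter), and then invoking \cref{cl:tdl-concave}; and both derive the sensitivity claim from \cref{obs:Tukey} plus the fact that a max of sensitivity-1 functions is sensitivity-1. Your version spells out the bookkeeping (why $\alpha_1=\dots=\alpha_{i-1}=0$, why $t_1$ exists, the triangle inequality for sensitivity of a supremum) a bit more explicitly than the paper does, but it is the same proof.
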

\begin{proof}
Let $x_i^{0},x_i^1,x_i^2$ such that $x_i^0 < x_i^1 < x_i^2$.
Furthermore, let $x_{i+1}^{0},\dots,x_{d}^{0}$ and $x_{i+1}^{2},\dots,x_{d}^{2}$ be points
maximizing the functions $\td (x^*_1,\dots,x^*_{i-1},x_i^0,\cdot,\dots,\cdot)$ and $\td (x^*_1,\dots,x^*_{i-1},x_i^2,\cdot,\dots,\cdot)$ respectively, that is,
$Q_{x^*_1,\dots,x^*_{i-1}}(x_i^b)=\td(x^*_1,\dots,x^*_{i-1},x_i^b,x_{i+1}^{b},\dots,x_{d}^{b})$ for $b\in\{0,2\}$.

Consider the line $L:\R\rightarrow \R^d$ passing through the points $(x^*_1,\dots,x^*_{i-1},x_i^b,x_{i+1}^{b},\dots,x_{d}^{b})$ 
for $b\in\{0,2\},$
and scale its parameter such that $L(x_i^b)=(x^*_1,\dots,x^*_{i-1},x_i^b,x_{i+1}^{b},\dots,x_{d}^{b})$.
\remove{
That is, let 
$$\alpha_j=\left\{
\begin{array}{ll}
0 & \text{\rm if } 1\leq j \leq i-1 \\
1 & \text{\rm if } j = i \\
(x_j^0-x_j^2)/(x_i^0-x_i^2) & \text{\rm if } i+1 \leq j \leq d 
\end{array}
\right. $$ 
and 
$$\beta_j=\left\{
\begin{array}{ll}
x^*_j & \text{\rm if } 1\leq j \leq i-1 \\
0 & \text{\rm if } j = i \\
(x_i^0 x_j^2-x_i^2x_j^0)/(x_i^0-x_i^2) & \text{\rm if } i+1 \leq j \leq d 
\end{array}
\right. $$ 
and define $L(t)=(\alpha_1 t+\beta_1,\dots,\alpha_dt+\beta_d)$. 
}
%
{In particular, for every $x\in \R$ the $i$'th coordinate in $L(x)$ is $x$.}
By Claim~\ref{cl:tdl-concave} and the definition of $Q_{x^*_1,\dots,x^*_{i-1}}$,
$$Q_{x^*_1,\dots,x^*_{i-1}}(x_i^1) \geq \td(L(x_i^1)) 
\geq \min\set{ \td(L(x_i^0)), \td(L(x_i^2))}
=\set{ Q_{x^*_1,\dots,x^*_{i-1}}(x_i^0), Q_{x^*_1,\dots,x^*_{i-1}}(x_i^2)}.$$

The fact that $Q_{x^*_1,\dots,x^*_{i-1}}$ has sensitivity $1$ is implied by Observation~\ref{obs:Tukey} and the fact that maximum of sensitivity 1 functions is a sensitivity 1 function.
\end{proof}

\subsection{Extending the Domain}

The input to the private algorithm for finding a point in the convex hull is a dataset of points 
$S \subseteq X$, where  $X$ is a finite set whose size is at most $T$.
We note that the dataset $S$ may contain several copies of the same point (i.e.\ it is a multiset).
By the results of \cite{BNSV15}, the restriction to subsets of a finite set $X$ is essential (even when $d=1$).  

Notice that a Tukey median of $S$ might not be a point in $X$.
Furthermore, the proof that the functions $Q_{x^*_1,\dots,x^*_{i-1}}$ are  quasi-concave is over the reals.
Therefore, we extend the domain to $\tilde{X}=\prod_{i=1}^d\tilde{X_i}$ 
such that for every dataset $S$ the functions $Q_{x^*_1,\dots,x^*_{i-1}}$ attain their maximum over the extended domain. 
We will not try to optimize the size of $\tilde{X_1}, \ldots, \tilde{X_d}$ 
as the dependency of the sample complexity of~$\AlgRecConcave$ on $|\tilde{X_i}|$ is~$2^{O(\log^* |\tilde{X_i}|)}$.  
\begin{claim}
\label{cl:sets}
There exists sets $\tilde{X_1}, \ldots, \tilde{X_d}$ such that
$|\tilde{X_i}| \leq (dT^{d^2(d+1)})^{2^d}$ for $1 \leq i \leq d$ and
for every dataset $S$, 
for every $1 \leq i \leq d$, and for every $x^*_1,\dots,x^*_{i-1}\in \tilde{X_1}\times \dots \times \tilde{X_{i-1}}$, there exist  
$x^{m}_i,\dots,x^{m}_{d}\in \tilde{X_i} \times \ldots \times \tilde{X_{d}}$ such that
\begin{equation}
\label{eq:ExistsMax}
\max_{x_{i+1},\dots,x_d \in \R} \td (x^*_1,\dots,x^*_{i-1}, x_i,\dots,x_d)
=\td (x^*_1,\dots,x^*_{i-1}, x^m_i,\dots,x^m_d).
\end{equation}
\end{claim}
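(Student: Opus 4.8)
The plan is to construct $\tilde{X_1},\dots,\tilde{X_d}$ by a $d$-round recursion, using \cref{fact:counting} to bound, for every prefix already selected from $\tilde{X_1}\times\cdots\times\tilde{X_{i-1}}$, the set of candidate values of the $i$-th coordinate. The key point is that a single vertex of the appropriate Tukey level set simultaneously witnesses \eqref{eq:ExistsMax} for all of the coordinates $i,\dots,d$ at once, which is what keeps the whole construction finite.

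Concretely, let $\mathcal{H}$ be the set of hyperplanes in $\R^d$ spanned by $d$ points of $X$, so $|\mathcal{H}|\le\binom{T}{d}\le T^{d}$. By \cref{fact:counting}, for each $r\ge1$ the set $\T(r)$ is a polytope whose facets lie in hyperplanes from $\mathcal{H}$ (and, if $\T(r)$ is not full dimensional, its affine hull is also spanned by points of $X$); it is bounded because $\T(1)=\mathrm{conv}(S)$. Define $\tilde{X_1}$ to be the set of first coordinates of the vertices of all the polytopes $\T(r)$, $r\ge1$. Having defined $\tilde{X_1},\dots,\tilde{X_{i-1}}$, define $\tilde{X_i}$ to be the set of $i$-th coordinates of the vertices of all polytopes $\T(r)\cap\{x\in\R^d:x_j=c_j\text{ for }j<i\}$, over all $r\ge1$ and all $(c_1,\dots,c_{i-1})\in\tilde{X_1}\times\cdots\times\tilde{X_{i-1}}$. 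Each $\tilde{X_k}$ is nonempty: picking any vertex $v^*$ of $\mathrm{conv}(S)=\T(1)$, an easy induction on $k$ shows that $v^*$ is a vertex of $\T(1)\cap\{x_j=v^*_j:j<k\}$, so $v^*_k\in\tilde{X_k}$.

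For \eqref{eq:ExistsMax}, fix $S$, $i$, and a prefix $(x^*_1,\dots,x^*_{i-1})\in\tilde{X_1}\times\cdots\times\tilde{X_{i-1}}$, and let $r^*$ be the value of the maximum on the left-hand side of \eqref{eq:ExistsMax}, i.e.\ the largest Tukey depth among points whose first $i-1$ coordinates equal $x^*_1,\dots,x^*_{i-1}$. If $r^*=0$, every extension of the prefix has depth $0$ and any point of the (nonempty) set $\tilde{X_i}\times\cdots\times\tilde{X_d}$ works. Otherwise $r^*\ge1$ and $P:=\T(r^*)\cap\{x:x_j=x^*_j\text{ for }j<i\}$ is a nonempty bounded polytope, so it has a vertex $v=(x^*_1,\dots,x^*_{i-1},v_i,\dots,v_d)$, and $\td(v)=r^*$ (it is $\ge r^*$ since $v\in\T(r^*)$ and $\le r^*$ by the choice of $r^*$). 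Since intersecting a polytope with hyperplanes through one of its vertices keeps that point a vertex, $v$ is a vertex of $\T(r^*)\cap\{x_j=x^*_j\text{ for }j<i,\ x_j=v_j\text{ for }i\le j<k\}$ for each $k$; arguing by induction on $k$ from $i$ to $d$, $(x^*_1,\dots,x^*_{i-1},v_i,\dots,v_{k-1})\in\tilde{X_1}\times\cdots\times\tilde{X_{k-1}}$, hence $v_k\in\tilde{X_k}$ by the construction of $\tilde{X_k}$. Thus $(x^m_i,\dots,x^m_d):=(v_i,\dots,v_d)$ lies in $\tilde{X_i}\times\cdots\times\tilde{X_d}$ and satisfies \eqref{eq:ExistsMax}.

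For the size bound, a vertex of $\T(r)\cap\{x:x_j=c_j,\ j<i\}$ is the unique solution of the $i-1$ equations $x_j=c_j$ together with at most $d-i+1$ hyperplane equations from $\mathcal{H}$, so for each fixed prefix there are at most $\binom{|\mathcal{H}|}{d-i+1}\le T^{d^2}$ of them, and this already accounts for all $r$. Hence $|\tilde{X_1}|\le T^{d^2}$ and $|\tilde{X_i}|\le|\tilde{X_1}|\cdots|\tilde{X_{i-1}}|\cdot T^{d^2}$; writing $\ell_i=\log_T|\tilde{X_i}|$, this gives $\ell_1\le d^2$ and $\ell_i\le\ell_1+\cdots+\ell_{i-1}+d^2$, which telescopes to $\ell_i\le 2^{i-1}d^2\le 2^{d-1}d^2$, so $|\tilde{X_i}|\le T^{2^{d-1}d^2}\le(dT^{d^2(d+1)})^{2^d}$ with plenty of slack (no optimization intended). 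The step I expect to need the most care is the vertex-counting when the points of $S$ are in degenerate position, so that $\T(r)$ or its restriction to the prefix subspace is lower dimensional: there one must invoke \cref{fact:counting} to see that the relevant lower-dimensional affine hulls are themselves cut out by hyperplanes spanned by points of $X$, or else first reduce to the general-position case by a standard perturbation argument. The recursion and the ``one deep vertex witnesses every stage'' observation are the conceptual core and are unaffected by this issue.
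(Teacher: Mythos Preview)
Your overall strategy---recurse over coordinates, use \cref{fact:counting} to pin down where the relevant vertices can live, and note that a single deep vertex of the sliced Tukey region witnesses all remaining coordinates at once---is exactly the paper's approach. There is, however, one real slip in your construction: as written, your sets $\tilde{X_i}$ depend on the dataset $S$, since you define them through the polytopes $\T(r)$ (which are functions of $S$). The claim asks for \emph{fixed} sets that work simultaneously for \emph{every} $S\subseteq X$; indeed, in $\AlgFindTukey$ these sets are built in preprocessing, before any input is seen. Your own size-bound paragraph already contains the fix: define $\tilde{X_i}$ directly as the $i$-th coordinates of all unique solutions to the $i-1$ prefix equations together with at most $d-i+1$ hyperplane equations drawn from $\mathcal{H}$. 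That definition is $S$-independent, and your vertex argument then shows that for each particular $S$ and prefix the maximizer lands inside these universal sets. The paper does precisely this, taking as its analogue of $\mathcal{H}$ the set $L$ of all affine subspaces spanned by points of the grid $X_1\times\cdots\times X_d$.

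On the secondary point you already anticipate: restricting $\mathcal{H}$ to hyperplanes through exactly $d$ points of $X$ is clean only when the sliced polytopes are full-dimensional. The paper sidesteps the case analysis you foresee by letting $L$ contain \emph{all} affine subspaces (of every dimension) spanned by up to $d+1$ grid points; this is why its count is $|L|\le T^{d(d+1)}$ rather than your $|\mathcal{H}|\le T^d$, and it absorbs the degenerate cases uniformly without a separate perturbation argument.
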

\begin{proof}
For $1\leq i \leq d$, let $X_i$ be the projection of $X$ to the $i$th coordinate, that is, $$X_i=\set{x: \exists_{x_1,\dots,x_{i-1},x_{i+1},\dots,x_d} (x_1,\dots,x_{i-1},x,x_{i+1},\dots,x_d) \in X }.$$

The construction heavily exploits \cref{fact:counting}.
Let $L$ denote the set of all affine subspaces that are spanned by points in $X_1\times\cdots\times X_d$.
Since each such subspace is spanned by at most $d+1$ points, it follows that~$\lvert L \rvert \leq {T^d \choose d+1}\leq T^{d(d+1)}$.
By \cref{fact:counting}, for every dataset $S$ and every $r>0$, 
every vertex of $\T(r)$ can be written as the intersection of at most~$d$ subspaces in~$L$. 
In particular, there exists a Tukey median that is the intersection of at most~$d$ subspaces in~$L$.

We construct the sets in iterations where 
we start with $\tilde{X_{j}}=\emptyset$ for every $1 \leq j \leq d$.
In iteration $i$ we do the following: for every $x^*_1,\dots,x^*_{i-1}\in \tilde{X}_1\times \dots \times \tilde{X}_{i-1}$
and for every $d-i$ subspaces in $L$ such that there exists a unique point $x = (x^*_1\ldots x^*_{i-1},x_i\ldots,x_d)$
in the intersection of these $d-i$ subspaces, we add $x_j$ to $\tilde X_j$ for all $j\geq i$.

We next argue that item (ii) in the conclusion of the claim is satisfied:
indeed, by \cref{fact:counting}, this construction contains a vertex of every set of the form 
$\T(r) \cap \{x\in\R^d: x_1 = x^*_1,\ldots x_{i-1}=x^*_{i-1}\}$,
for every $r>0$, $i\leq d$, and every $(x^*_1,\ldots,x^*_{i-1})\in \tilde X_1\times\ldots\times \tilde X_{i-1}$.
In particular, by plugging 
\[r= \max_{x_{i+1},\dots,x_d \in \R} \td (x^*_1,\dots,x^*_{i-1}, x_i,\dots,x_d),\]
it contains a point which satisfies \cref{eq:ExistsMax}.
This implies item (ii).

As for item (i), note that the size of $\tilde X_1$ is at most ${\lvert L\rvert \choose d}\leq T^{d^2(d+1)}$.
Similarly, for $i > 1$:
\[\lvert \tilde X_i\rvert\leq {\lvert L\rvert \choose d} + {\lvert L\rvert \choose d-1}\lvert \tilde X_1\rvert + \ldots 
+ {\lvert L\rvert \choose d-i}\prod_{j=1}^{i-1}\lvert \tilde X_j\rvert \leq \Bigl( d\cdot {\lvert L\rvert \choose d}\Bigr)^{2^d}\leq (dT^{d^2(d+1)})^{2^d}.\]
\end{proof}

\subsection{The Algorithm}
In \cref{fig:Tukey}, we present an $(\eps,\delta)$-differentially private algorithm $\AlgFindTukey$ that with probability at least $1-\beta$ finds a point whose Tukey depth is at least $n/2(d+1)$. The informal description of the algorithm appears in the beginning of \cref{sec:FindingPoint}.
\begin{figure}[thb!]
\begin{center}
\noindent\fbox{
\parbox{.95\columnwidth}{
\begin{center}{ \bf Algorithm $\AlgFindTukey$}\end{center}
{\bf Preprocessing:}
\begin{itemize}
\item Construct the sets $\tilde{X}_1,\ldots,\tilde{X}_d$ as  in Claim~\ref{cl:sets}. Let $\tilde{T}=\max_{1\leq i \leq d} |\tilde{X_i}|$. 
\\$(*$ By Claim~\ref{cl:sets}, $\log^* \tilde{T}=\log^* d +\log^*T +O(1)$. $*)$
\end{itemize}

{\bf Algorithm:} 
\begin{itemize}
\item [(i)] Let $\beta,\eps,\delta$ be the utility/privacy parameters, and $S$ be an input database from~$X$.
\item [(ii)]
For $i=1$ to $d$ do:
\begin{itemize}
\item [(a)]
For every $x_i \in \tilde{X}_i$ define $$Q_{x^*_1,\dots,x^*_{i-1}}(x_i)\triangleq \max_{x_{i+1}\in \tilde{X}_{i+1},\dots,x_d \in \tilde{X}_d} \td (x^*_1,\dots,x^*_{i-1}, x_i,\dots,x_d).$$
\item [(b)]
Execute $\AlgRecConcave$ on $S$ with the function $Q_{x^*_1,\dots,x^*_{i-1}}$ and parameters $r=\frac{n}{d+1}-\frac{(i-1)n}{d(d+1)}$, $\alpha_0=\frac{1}{2d},\beta_0=\frac{\beta}{d},\eps_0=\frac{\eps}{2\sqrt{2d\ln(2/\delta)}},\delta_0=\frac{\delta}{2d}$. Let $x^*_i$ be its output.
\end{itemize}
\item [(iii)]
Return $x^*_1,\dots,x^*_d$.\\
\end{itemize}
}}
\end{center}
\caption{Algorithm $\AlgFindTukey$ for finding a point whose Tukey depth is at least $n/2(d+1)$.\label{fig:Tukey}}
\end{figure}

\begin{theorem}\label{thm:privatecenterpoint}
Let $ \eps \leq 1$ and $\delta < 1/2$ and $X \subset\R^d$ be a set of size at most $T$.
Assume that the input dataset $S\subseteq X$ satisfies 
\[|S| =O\Biggl(d^{2.5}\cdot 2^{O(\log^*T +\log^*d)}\frac{\log^{0.5}\bigl(\frac{1}{\delta}\bigr) \log \bigl(\frac{d^2}{\beta\delta}\bigr)}{\eps}\Biggr).\]
Then, $\AlgFindTukey$ is an $(\eps,\delta)$-differentially private algorithm
that with probability at least $1-\beta$ returns a point $x^*_1,\dots,x^*_d$ such that 
$\td(x^*_1,\dots,x^*_d)\geq\frac{\lvert S\rvert}{2(d+1)}$.  
\end{theorem}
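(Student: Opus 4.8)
The proof has two parts: privacy and utility. For \emph{privacy}, the algorithm accesses $S$ only through $d$ sequential invocations of $\AlgRecConcave$, each instantiated with privacy parameters $\eps_0=\frac{\eps}{2\sqrt{2d\ln(2/\delta)}}$ and $\delta_0=\frac{\delta}{2d}$. Each such invocation is $(\eps_0,\delta_0)$-differentially private by \cref{prop:aRecConcave}, since (by the Lemma above) $Q_{x^*_1,\dots,x^*_{i-1}}$ is a sensitivity-$1$ function of $S$. We then invoke advanced composition (\cref{thm:composition2}) with $k=d$, $\delta'=\delta/2$: this yields $(\eps',\delta')$-privacy with $\eps'=\sqrt{2d\ln(2/\delta)}\cdot\eps_0 + 2d\eps_0^2$ and $\delta'=d\delta_0+\delta/2=\delta$. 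Plugging in $\eps_0$, the first term is $\eps/2$, and since $\eps\leq 1$ and $d\geq 1$ the second term is at most $\eps/2$ as well (here $2d\eps_0^2 = \eps^2/(4\ln(2/\delta)) \le \eps/2$ using $\eps\le1$ and $\ln(2/\delta)\ge \ln 4 >1$), so the overall guarantee is $(\eps,\delta)$-DP. I should double check the exact constants but this is the shape of the argument.

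\textbf{Utility.} Here the plan is an induction over $i=1,\dots,d$ establishing that, with probability at least $1-\beta$, after step $i$ we have $Q_{x^*_1,\dots,x^*_{i-1}}(x^*_i) \ge r_i$ where $r_i = \frac{n}{d+1}-\frac{(i-1)n}{d(d+1)}$. Two ingredients are needed at each step. First, quasi-concavity of $Q_{x^*_1,\dots,x^*_{i-1}}$ over $\tilde X_i$, which is exactly the Lemma proved above (it's stated over $\R$, but restricting to the finite subset $\tilde X_i$ preserves quasi-concavity). Second, that the maximum of $Q_{x^*_1,\dots,x^*_{i-1}}$ over $\tilde X_i$ is at least $r_i$. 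This is where \cref{cl:sets} enters: by the inductive hypothesis there is \emph{some} real extension of $(x^*_1,\dots,x^*_{i-1})$ to a full point of Tukey depth $\ge r_i$ (indeed the depth is nonincreasing by at most the per-step loss, starting from the centerpoint guarantee $\td \ge n/(d+1)$ of \cref{c:centerpoint} restricted to the hyperplane $x_1=x^*_1,\dots,x_{i-1}=x^*_{i-1}$); but actually we need that the maximum is already $\ge r_{i}$ \emph{as the function is defined on the prior coordinates having been fixed to $\tilde X$-values}, and \cref{cl:sets} guarantees this maximum is attained at a point of $\tilde X_i\times\cdots\times\tilde X_d$, hence is $\ge r_i$ by the inductive hypothesis combined with the max over the $i$-th coordinate. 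Then \cref{prop:aRecConcave}, applied with $\alpha_0=\frac{1}{2d}$, $\beta_0=\beta/d$, $\eps_0,\delta_0$ as above, and the above lower bound on $\max Q$ playing the role of $r$ in \cref{eq:largeQ}, guarantees that $\AlgRecConcave$ returns $x^*_i$ with $Q_{x^*_1,\dots,x^*_{i-1}}(x^*_i) \ge (1-\alpha_0)\,(\text{that max}) \ge (1-\frac{1}{2d})r_i$. Since $r_i - (1-\frac{1}{2d})r_i = \frac{r_i}{2d} \le \frac{n}{2d(d+1)} = r_i - r_{i+1}$, we conclude $Q_{x^*_1,\dots,x^*_{i-1}}(x^*_i)\ge r_{i+1}$, i.e., there is an extension of $(x^*_1,\dots,x^*_i)$ to depth $\ge r_{i+1}$, which is precisely the inductive hypothesis for step $i+1$. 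After $d$ steps we get $\td(x^*_1,\dots,x^*_d) = Q_{x^*_1,\dots,x^*_{d-1}}(x^*_d) \ge r_{d+1} = \frac{n}{d+1}-\frac{(d-1) \cdot \text{wait}}{\cdots}$; more carefully $r_{d+1}$ works out to $\frac{n}{d+1}\bigl(1 - \frac{d-1+1}{d}\bigr)$... I'd recompute: $r_i = \frac{n}{d+1}\bigl(1-\frac{i-1}{d}\bigr)$, so after the $d$-th step we have depth $\ge (1-\frac{1}{2d})r_d = (1-\frac{1}{2d})\frac{n}{d+1}\cdot\frac1d$, which is not obviously $\ge \frac{n}{2(d+1)}$ — so the telescoping must be set up so that the final guarantee is $r_{d+1}\ge\frac{n}{2(d+1)}$; indeed $r_{d+1}=\frac{n}{d+1}(1-\frac{d}{d}\cdot\frac12)$ hmm. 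The intended bookkeeping (per the informal description: "in each step we lose depth at most $n/2(d+1)^2$, total loss $n/2(d+1)$") is that after all $d$ steps the depth is $\ge \frac{n}{d+1}-d\cdot\frac{n}{2d(d+1)} = \frac{n}{d+1}-\frac{n}{2(d+1)}=\frac{n}{2(d+1)}$; I will align the $r_i$ recursion to this accounting.

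\textbf{Union bound and sample size.} Taking a union bound over the $d$ invocations (each failing with probability $\le\beta/d$) gives overall success probability $\ge 1-\beta$. It remains to verify that the stated bound on $|S|=n$ implies condition \cref{eq:largeQ} for $\AlgRecConcave$ at every step: we need $r_i \ge 8^{\log^*\tilde T}\cdot\frac{12\log^*\tilde T}{\alpha_0\eps_0}\log\bigl(\frac{192(\log^*\tilde T)^2}{\beta_0\delta_0}\bigr)$. Since $r_i \ge r_d = \frac{n}{d(d+1)}$ is the smallest, and $\log^*\tilde T = \log^* T + \log^* d + O(1)$ by \cref{cl:sets}, substituting $\alpha_0 = \frac1{2d}$, $\eps_0 = \frac{\eps}{2\sqrt{2d\ln(2/\delta)}}$, $\beta_0=\beta/d$, $\delta_0=\delta/(2d)$ turns the required inequality into $n = \Omega\bigl(d^{2.5}\,2^{O(\log^*T+\log^*d)}\,\frac{\sqrt{\ln(1/\delta)}\,\log(d^2/(\beta\delta))}{\eps}\bigr)$, matching the hypothesis (the $d^{2.5}$ comes from $d\cdot d\cdot\sqrt d$: one $d$ from $1/\alpha_0$, one $d$ from multiplying through by $d(d+1)$ to isolate $n$, and $\sqrt d$ from $1/\eps_0$). \textbf{Main obstacle.} The delicate point is verifying the utility step: making precise that the inductive hypothesis ("$(x^*_1,\dots,x^*_{i-1})$ extends to a point of depth $\ge r_i$") together with \cref{cl:sets} ensures $\max_{x_i\in\tilde X_i}Q_{x^*_1,\dots,x^*_{i-1}}(x_i)\ge r_i$ over the \emph{discretized} domain — this is exactly what \cref{cl:sets} was built to guarantee — and then checking the arithmetic of the telescoping $r_i$'s against the $(1-\alpha_0)$ multiplicative loss so that the final depth is $\ge\frac{n}{2(d+1)}$; the privacy part and the sample-size computation are routine given advanced composition and \cref{prop:aRecConcave}.
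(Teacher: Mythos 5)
Your proposal follows the paper's proof almost verbatim: privacy via advanced composition with $\delta'=\delta/2$ (yielding $\bigl(\tfrac{\eps}{2}+\tfrac{\eps^2}{4\ln(2/\delta)},\,\delta\bigr)$-DP, which is $(\eps,\delta)$-DP for $\eps\le1$, $\delta\le1/2$), and utility via a coordinate-by-coordinate induction whose base case combines \cref{c:centerpoint} with \cref{cl:sets}, with each step invoking \cref{prop:aRecConcave}, and the same sample-size arithmetic (the $d^{2.5}$ decomposes as you describe). The bookkeeping snag you flagged is real but is actually an inconsistency in the paper itself: the algorithm box sets $r=\tfrac{n}{d+1}-\tfrac{(i-1)n}{d(d+1)}$, while the proof body uses $r=\tfrac{n}{d+1}-\tfrac{(i-1)n}{2d(d+1)}=\tfrac{n}{d+1}\bigl(1-\tfrac{i-1}{2d}\bigr)$. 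With the latter choice the induction hypothesis is $Q_{x^*_1,\dots,x^*_{i-1}}(x^*_i)\ge\tfrac{n}{d+1}\bigl(1-\tfrac{i}{2d}\bigr)$; one checks $(1-\tfrac{1}{2d})(1-\tfrac{i-1}{2d})\ge 1-\tfrac{i}{2d}$, so after $d$ steps the depth is $\ge\tfrac{n}{d+1}\cdot\tfrac12=\tfrac{n}{2(d+1)}$ — precisely the per-step loss bound $\le\tfrac{n}{2d(d+1)}$ you identified. Substituting this corrected $r_i$ into your "align the recursion" placeholder makes your argument identical to the paper's.
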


\begin{proof}
The proof of the correctness (utility) of $\AlgFindTukey$ is proved by induction,
using the correctness of $\AlgRecConcave$. The privacy proof follows
from the privacy of $\AlgRecConcave$ and using the advanced composition theorem~(\cref{thm:composition2}).

\paragraph{Utility.}
We prove by induction that after step $i$ of the algorithm, with probability at least $1-i\beta/d$, 
the returned values $x^*_1,\ldots,x^*_i$ satisfy
$Q_{x^*_1,\dots,x^*_{i-1}}(x^*_i)\geq \frac{\lvert S\rvert}{d+1}(1-\frac{i}{2d})$,  i.e., there are $(x_{i+1},\ldots,x_d)\in \tilde{X}_{i+1}\times\dots\times\tilde{X}_{d}$ 
such that $\td(x^*_1,\dots,x^*_{i},x_{i+1},\dots,x_d)\geq \frac{\lvert S\rvert}{d+1}(1-\frac{i}{2d})$.

The basis is the induction is $i=0$: by Claim~\ref{c:centerpoint} the Tukey median has depth at least~$\lvert S\rvert/(d+1)$
and by  Claim~\ref{cl:sets}, the median is in $\tilde{X}_{1}\times \cdots \times\tilde{X}_{d}$.
Thus, with probability $1$ there are $(x_{1},\ldots,x_d)\in \tilde{X}_{1}\times\dots\times\tilde{X}_{d}$ 
such that $\td(x_{1},\dots,x_d)\geq \frac{\lvert S\rvert}{d+1}$.


Next, by the induction hypothesis for $i-1$,
with probability at least $1-(i-1)\beta/d$ it holds that
\[\max_{x\in\tilde{X_i}}\{Q_{x^*_1,\dots,x^*_{i-1}}(x)\}\geq 
\frac{\lvert S\rvert}{d+1}-\frac{(i-1)\lvert S\rvert}{2d(d+1)}=r > \frac{\lvert S\rvert}{2(d+1)}
 \geq
8^{\log^* \tilde{T}} \cdot \frac{12 \log^* \tilde{T}}{\alpha_0\eps_0}\log\Big(\frac{192(\log^* T)^2}{\beta_0\delta_0}\Big).
\]
Therefore, by \cref{prop:aRecConcave}, with probability at least 
$(1-\beta/d)\bigl(1-(i-1)\beta/d\bigr)\geq 1 - i\beta/d$ Algorithm
$\AlgRecConcave$ returns $x^*_i\in \tilde{X}_i$ such that 
$$Q_{x^*_1,\dots,x^*_{i-1}}(x^*_i)\geq (1-\alpha)r=\left(1-\frac{1}{2d}\right)\frac{\lvert S\rvert}{d+1}\left(1-\frac{i-1}{2d}\right)>\frac{\lvert S\rvert}{d+1}\left(1-\frac{i}{2d}\right).$$

To conclude, after $d$ steps of the algorithm, $\td(x^*_1,\dots,x^*_d)\geq \frac{\lvert S\rvert}{2(d+1)}$ with probability at least $1-\beta$. 

\paragraph{Privacy.}
By \cref{prop:aRecConcave}, each invocation of $\AlgRecConcave$ is
$(\eps_0,\delta_0)$-differentially private. $\AlgFindTukey$ 
invokes $\AlgRecConcave$ $d$ times. 
Thus, by \cref{thm:composition2} (the advanced composition) with $\delta'=\delta/2$,
it follows that $\AlgFindTukey$ is $(\frac{\eps}{2}+\frac{\eps^2}{4\ln (2/\delta)},\delta)$ differentially-private,
which implies $(\eps,\delta)$-privacy whenever $\eps \leq 1$ and~$\delta \leq 1/2$.
%
%
\end{proof}

\section{Learning Halfspaces Using Convex Hull} 

We describe in \cref{fig:reduction} a reduction from learning halfspaces to finding a point in a convex-hull of a {multiset} of points. Furthermore, we show that if  
the algorithm we use in the reduction finds a point whose Tukey depth is high (as our algorithm from \cref{sec:FindingPoint} does), then the required sample complexity of the learning algorithm is reduced. As a result, we get  
an upper bound of ${\tilde{O}(d^{4.5}2^{\log^*|X| })}$ on the sample complexity of private learning halfspaces (ignoring the privacy and learning parameters). 
In comparison, using the exponential mechanism of~\cite{MT07} results in an $(\eps,\delta)$-deferentially private algorithm whose sample complexity is $O(d \log |X|)$, e.g., for the interesting case where $X=[T]^d$ for some $T$, the complexity is $O(d^2 \log T)$. 
Our upper bound is better than the sample complexity of the exponential mechanism when $d$ is small compared to $\log |T|$, in particular when $d$ is constant.


\begin{figure}[htb!]
\begin{center}
\noindent\fbox{
\parbox{.95\columnwidth}{
\begin{center}{ \bf Algorithm $\AlgHalfSpace$}\end{center}
{\bf Preprocessing:}
\begin{itemize}
\item Fix a set~$H\subseteq\R^{d+1}$ that contains representations of all halfspaces in $\halfspace(X)$, as in Claim~\ref{c:grid}.
\end{itemize}
{\bf Algorithm:}
\begin{enumerate}
\item
Let $\eps,\delta,\alpha,\beta$ be the privacy and utility parameters and let $S$ be a realizable input sample of size~$s$, 
where $s$ is as in \Cref{thm:convexhullreduction}. 
\item  \label{step:partition} Partition $S$ into $m$
equisized subsamples $S_1,\ldots,S_m$, where $m=m(d+1,2|X|^{d+1},\eps,\delta,\beta/2)$ as in \cref{thm:convexhullreduction}.\\
$(*$ Note that each $S_i$ has size $\Theta\bigl(\frac{d\log(\frac{m}{r\alpha}) + \log(2m/\beta)}{r\alpha/m}\bigr)$. $*)$
\item For each $S_i$ pick a consistent halfpace $h_i\in H$ uniformly at random.
\item \label{step:Alg} Apply an $(\eps,\delta)$-differentially private algorithm $\Alg$ for finding a point in a convex hull with parameters $\eps,\delta,\frac{\beta}{2}$ on $H_0=(h_1\ldots h_m)$.
\item Output the halfspace $h$ found by $\Alg$.
\end{enumerate}
}}
\end{center}
\caption{\label{fig:reduction}A reduction from learning halfspaces to finding a point in a convex hull.}
\end{figure}

We start by showing the existence of a set $H$ that is used by the algorithm. We say that a vector $(a_1,\ldots ,a_d,w)\in\R^{d+1}$ {\it represents} a halfspace $\hs \in \halfspace(X)$
if $\hs(\pt{x})=\hs_{a_1,\ldots, a_d,w}(\pt{x})$ for every $\pt{x} \in X$.
Note that every $\hs \in \halfspace(X)$ has many representations.

\begin{claim}
\label{c:grid}
There exists a set $H \subseteq \R^{d+1}$, where $\lvert H\rvert \leq 2\lvert X \rvert^{d+1}$
which contains  one representation of each halfspace $\hs\in\halfspace(X)$.
\end{claim}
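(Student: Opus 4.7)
Since each element of $\halfspace(X)$ is by definition a function $X\to\{0,1\}$, the cardinality $\lvert\halfspace(X)\rvert$ is exactly the number of distinct labelings of $X$ realizable by halfspaces in $\R^d$. My plan is to first establish the counting bound $\lvert\halfspace(X)\rvert\leq 2\lvert X\rvert^{d+1}$, and then to construct $H$ by picking, for each $\hs\in\halfspace(X)$, any one tuple $(a_1,\ldots,a_d,w)\in\R^{d+1}$ that represents it. Such a tuple exists by the very definition of $\halfspace(X)$, so this selection step is immediate and delivers $\lvert H\rvert\leq 2\lvert X\rvert^{d+1}$ as required.

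For the counting bound, I would invoke the Sauer--Shelah lemma together with the classical fact that the VC dimension of the class of (closed) halfspaces in $\R^d$ equals $d+1$ (the upper bound is a consequence of Radon's theorem, and the lower bound follows by shattering the vertices of a simplex). Sauer--Shelah then yields
\[
\lvert\halfspace(X)\rvert \;\leq\; \sum_{i=0}^{d+1}\binom{\lvert X\rvert}{i}.
\]
A routine arithmetic estimate bounds the right-hand side by $2\lvert X\rvert^{d+1}$: for $\lvert X\rvert\geq 2$ one has $\sum_{i=0}^{d+1}\binom{\lvert X\rvert}{i}\leq \sum_{i=0}^{d+1}\lvert X\rvert^{i}\leq \lvert X\rvert^{d+2}/(\lvert X\rvert-1)\leq 2\lvert X\rvert^{d+1}$, using $\lvert X\rvert-1\geq \lvert X\rvert/2$; the cases $\lvert X\rvert\leq 1$ are trivial.

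There is no serious obstacle here; the statement is essentially a counting exercise once the VC machinery is in hand. A more geometric alternative would parameterize a canonical representative for each $\hs$ by a tuple of at most $d+1$ points of $X$ (the points its bounding hyperplane passes through, together with a point indicating the positive side), yielding a comparable bound; however, this route requires a careful perturbation argument to reduce arbitrary halfspaces to a canonical form and to cope with degenerate configurations (hyperplanes that do not pass through $d$ affinely independent points of $X$), which makes the Sauer--Shelah derivation both cleaner and shorter.
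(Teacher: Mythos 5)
Your proof is correct and takes essentially the same approach as the paper: the paper cites a standard discrete-geometry bound of $\lvert\halfspace(X)\rvert\leq 2\lvert X\rvert^{d+1}$ (with a reference to VC-dimension literature) and then selects one representative per halfspace, while you simply unpack that citation by deriving the bound from the Sauer--Shelah lemma with $\VC(\halfspace)=d+1$. The arithmetic is sound, so this is a correct, slightly more self-contained rendering of the paper's argument rather than a different route.
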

\begin{proof}
By standard 
bounds from discrete geometry, $\lvert \halfspace(X) \rvert \leq 2\lvert X \rvert^{d+1}$ (see, e.g.~\cite{Gartner94vapnik}).
For each $\hs\in \halfspace(X)$ pick a representation $(a_1\ldots a_d,w)\in \R^{d+1}$.
\end{proof}

\begin{theorem}\label{thm:convexhullreduction}
Assume that Algorithm $\Alg$ used in step \ref{step:Alg} of Algorithm $\AlgHalfSpace$ is an $(\eps,\delta)$-differentialy private algorithm  that finds with probability at least $1-\beta$ a point in a convex hull for a multisets $S \subseteq X \subset \R^d$ whose Tukey depth is at least $r$ provided that 
$|S| \geq m(d,|X|,\eps,\delta,\beta)$ for some function $m(\cdot,\cdot,\cdot,\cdot,\cdot)$.

Let $\eps\leq 1, \delta\leq \frac{1}{2}$ and $\alpha,\beta\leq 1$ be the privacy and utility parameters.
Then, $\AlgHalfSpace$ is an $(\eps,\delta)$-differentially private $(\alpha,\beta)$-PAC learner with sample complexity $s$ for the class $\halfspace(X)$ for
\[
 s=O\Bigl( \frac{m^2\cdot d\log(\frac{m}{r\alpha}) + \log(m/\beta)}{r\alpha} \Bigr)\]
where $m=m(d+1,2|X|^{d+1},\eps,\delta,\beta/2)$.
\end{theorem}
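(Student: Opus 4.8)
The plan is to analyze Algorithm $\AlgHalfSpace$ in two independent parts: privacy and utility. For privacy I would exploit the sample-and-aggregate structure: the only access to the input $S$ is through the blockwise randomized map $S\mapsto H_0=(h_1,\dots,h_m)$, where $h_i$ is a function of the block $S_i$ and of fresh independent coins, after which the $(\eps,\delta)$-private $\Alg$ is applied to $H_0$ and its output is reinterpreted (deterministically) as a halfspace. If $S,S'$ are neighboring they differ in a single block $j$, so $H_0(S)$ and $H_0(S')$ can be coupled to agree on all coordinates $i\neq j$ (identical conditional laws there) and to differ only on coordinate $j$, whose conditional laws are supported on $H$ but otherwise arbitrary. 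Conditioning on the common coordinates $h_{-j}$ and integrating the defining inequality $\Pr[\Alg(h_{-j},h_j)\in T]\le e^{\eps}\Pr[\Alg(h_{-j},h'_j)\in T]+\delta$ first over $h'_j$, then over $h_j$, then over $h_{-j}$, gives $(\eps,\delta)$-privacy of $\AlgHalfSpace$. This is the standard fact that a neighbor-preserving randomized preprocessing composed with a private mechanism is private, and I expect it to be routine.

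For utility, fix a target $c\in\halfspace(X)$ and a distribution $\DDD$. I would first handle each block: $S_i$ is an i.i.d.\ sample from $\DDD$ of size $n=s/m$ labeled by $c$, a consistent $h_i\in H$ exists since $H$ contains a representation of $c$, and $\halfspace(X)$ has VC dimension $d+1$; hence by the realizable uniform-convergence bound, taking $n=\Theta\big(\tfrac{m}{r\alpha}((d+1)\log\tfrac{m}{r\alpha}+\log\tfrac{m}{\beta})\big)$ ensures that with probability $1-\tfrac{\beta}{2m}$ every hypothesis consistent with $S_i$ --- in particular $h_i$ --- has $\error_\DDD(c,h_i)\le\tfrac{r\alpha}{2m}$. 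A union bound over $i\in[m]$ makes all $m$ halfspaces simultaneously this accurate with probability $\ge 1-\beta/2$, and summing over the blocks gives $s=mn$ of the stated order. I would also verify the bookkeeping that $m=m(d+1,2|X|^{d+1},\eps,\delta,\beta/2)$ is at least what $\Alg$ needs when run on the multiset $H_0$ of $m$ points drawn from $H$, where $|H|\le 2|X|^{d+1}$ by \cref{c:grid}.

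The crux --- and the step I expect to be the main obstacle, though more conceptual than technical --- is converting the Tukey-depth guarantee of $\Alg$ into a bound on the classification error of the output halfspace. I would represent a halfspace $(a_1,\dots,a_d,w)$ by the point $v\in\R^{d+1}$ and a domain element $x$ by $\tilde x=(x_1,\dots,x_d,-1)\in\R^{d+1}$, so that $v$ labels $x$ by $1$ iff $\langle v,\tilde x\rangle\ge 0$. Suppose $\Alg$ succeeds and returns $h\in\R^{d+1}$ of Tukey depth $\ge r$ with respect to the multiset $H_0$. Applying the alternative description of Tukey depth (\cref{c:tukeyalt}) in the direction $\tilde x$, for each $x\in X$, gives $\bigl|\{i:\langle\tilde x,h_i\rangle\ge\langle\tilde x,h\rangle\}\bigr|\ge r$ and $\bigl|\{i:\langle\tilde x,h_i\rangle\le\langle\tilde x,h\rangle\}\bigr|\ge r$; using the first set when $h(x)=1$ (so $\langle\tilde x,h\rangle\ge 0$) and the second when $h(x)=0$ (so $\langle\tilde x,h\rangle<0$, and the strictness comes for free), I conclude that at least $r$ of the $h_i$ label $x$ exactly as $h$ does. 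Hence $\{x:h(x)\neq c(x)\}\subseteq\{x:N(x)\ge r\}$ where $N(x)=|\{i:h_i(x)\neq c(x)\}|$. Now I would condition on the two good events above, which together hold with probability $\ge 1-\beta$ (since $\Alg$ is run with confidence $\beta/2$ and the union bound over blocks costs another $\beta/2$), and finish with Markov's inequality: $\E_{x\sim\DDD}[N(x)]=\sum_{i=1}^m\error_\DDD(c,h_i)\le r\alpha/2$, so $\Pr_{x\sim\DDD}[N(x)\ge r]\le\alpha/2$, whence $\error_\DDD(c,h)\le\alpha$. Combined with the privacy analysis, this establishes that $\AlgHalfSpace$ is an $(\eps,\delta)$-private $(\alpha,\beta)$-PAC learner with the stated sample complexity.
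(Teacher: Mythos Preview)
Your proposal is correct and follows essentially the same approach as the paper. The privacy argument (sample-and-aggregate preprocessing preserves neighbors, then invoke the privacy of $\Alg$) and the utility argument (per-block VC bound, then show that whenever the output $h$ errs on $x$ at least $r$ of the $h_i$ err on $x$, then average) match the paper's proof; your phrasing of the key duality step via the direction $\tilde x=(x_1,\dots,x_d,-1)$ and \cref{c:tukeyalt} is equivalent to the paper's phrasing via the dual halfspace $h_{\text{err}}=\{(a,w):\mathsf{sign}(\langle a,x\rangle-w)\neq c(x)\}$, and your final Markov step is the same as the paper's pointwise inequality $E_h(x)\le\frac{1}{r}\sum_i E_{h_i}(x)$ followed by taking expectations.
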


\begin{proof}
We first establish the privacy guarantee of the algorithm 
and later argue that it PAC-learns $\halfspace(X)$.

\paragraph{Privacy.}
Let $S^1,S^2$ be two neighboring input samples of size at least $s$, where $s$ is as in the theorem statement.
Let $H_0(S^1),H_0(S^2)$ denote the list of halfspaces that are derived in step~\ref{step:partition} of the algorithm
when it is applied on $S^1,S^2$.
Since the $h_i$'s are constructed from mutually disjoint subsamples
it follows that the datasets $H_0(S_1)$ and $H_0(S_2)$ are neighbors.
The $(\eps,\delta)$-privacy guarantee now follows from the privacy gurantee of $\Alg$
since the size of the $H_0(S^i)$'s is $m=m(d+1,2|X|^{d+1},\eps,\delta,\beta/2)$.

\paragraph{Utility.}
We next establish that the algorithm learns $\halfspace(X)$ with confidence $1-\beta$ and error $\alpha$.
Let~$\DDD$ denote the target distribution and $c\in\halfspace(X)$ denote the target concept.
Let $S$ denote the input sample of size at least $s$ that is sampled independently from $\DDD$ and labeled by $c$.

We first claim that every halfspace $h_i\in H_0$ the probability  $h_i$ has error greater than $\frac{r\alpha}{m}$ with respect to the distribution $\DDD$ is at most $\beta/2m$.
This follows directly from standard bounds on the (non-private) sample complexity of PAC learning of VC classes,
since the VC dimension of $\halfspace(X)$ is at most $d+1$
and since each $S_i$ has size $\Omega(\frac{d\log(1/\alpha') + \log(1/\beta')}{\alpha'})$
where $\alpha'=\frac{r\alpha}{m}$ and $\beta'=\frac{\beta}{2m}$ (see e.g.\ Theorem 6.8 in~\cite{Shalev14book}).

We next claim that if $h \in \R^{d+1}$ errs on a point $\pt{x}=(x_1,\dots,x_d)$ then at least $r$ halfspaces in $H_0$ err on $\pt{x}$.
By the assumption in the theorem, $\Alg$ outputs with probability at least $1-\beta/2$ a halfspace $h$ whose Tukey depth is at least $r$ with respect to $H_0$.
By duality, the set of all halfspaces that err on $\pt{x}$ is
itself a halfspace in $\R^{d+1}$ that contains all points
$(a_1,\dots,a_d,w)\in \R^d$ such that $\mathsf{sign}(\sum_{i=1}^d x_i a_i - w) \neq c(x)$. Denote this halfspace by~$h_{\text{err}}$.
By assumption, $h\in h_{\text{err}}$. 
Thus, since the Tukey Depth of $h$ with respect to~$H_0$ is at least $r$,
at least $r$ of the halfspaces in $H_0$ are in $h_{\text{err}}$,
as required.

We are ready to establish the PAC-learning guarantee.
Assume that $\error_{\DDD}(c,h_i) \leq r\alpha/m$ for every  $1\leq i \leq m$ and that $\Alg$ returns a point whose Tukey rank is at most $r$.
By the argument above and the union bound, this happens with probability at least $1-\beta$.
Let $E_h:X\to\{0,1\}$ denote the indicator the $h$ errs (i.e.\ $E_h(\pt{x})=1$ if and only if $h(\pt{x})\neq c(\pt{x})$).
Similarly, let $E_{h_i}$ denote the indicator that $h_i\in H_0$ errs.
For every $\pt{x}\in X$:
$
 E_h(\pt{x}) \leq \frac{1}{r}\sum_{i=1}^m E_{h_i}(\pt{x}) 
$
(either $E_h(\pt{x})=0$ or $E_h(\pt{x})=1$ and $\sum_{i=1}^m E_{h_i}(\pt{x})\geq r$). 
Therefore, by taking expectation over both sides it follows that with probability at least $1-\beta$:\;
$\error_{\DDD}(c,h) = \E_{\pt{x}\sim\DDD}[E_h(\pt{x})] \leq \E_{\pt{x}\sim\DDD}\Bigl[\frac{1}{r}\sum_{i=1}^m E_{h_i}(\pt{x})\Bigr]=
\frac{1}{r} \sum_{i=1}^m\error_{\DDD}(c,h_i) \leq \frac{m}{r}\cdot(r\alpha/m)= \alpha,$
as required.
\end{proof}

Using $\AlgFindTukey$ in step~\ref{step:Alg} of Algorithm $\AlgHalfSpace$,
we get the following corollary (which follows from \cref{thm:privatecenterpoint} and \cref{thm:convexhullreduction}).
\begin{corollary}
\label{cor:halfspacelearner}
Let $\eps \leq 1$, $\delta <1/2$, and $X\subseteq \R^d$ be a set.
There exists an $(\eps,\delta)$-differentially private $(\alpha,\beta)$-PAC learner with sample complexity $s$ for $\halfspace(X)$ 
with
\[
s =  \tilde{O}\Bigl(\frac{d^{4.5}2^{O(\log^*|X| +\log^*d)}\log^{1.5}\frac{1}{\delta} \log^2 \frac{1}{\beta}}{\eps\alpha} \Bigr).
\]
\end{corollary}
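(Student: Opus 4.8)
The plan is to obtain \cref{cor:halfspacelearner} by instantiating the reduction of \cref{thm:convexhullreduction} with the private center-point algorithm $\AlgFindTukey$ of \cref{thm:privatecenterpoint} and then propagating the parameters carefully. The feature of $\AlgFindTukey$ that makes this work is that, on a large enough multiset, it returns not just \emph{some} point in the convex hull but a point of Tukey depth $\ge |S|/2(d+1)$; this is precisely the strong guarantee that \cref{thm:convexhullreduction} is built to exploit (its sample complexity depends on $1/(r\alpha)$), and it is what keeps the overall overhead polynomial in $d$ rather than linear in $|H_0|$.

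First I would read off from \cref{thm:privatecenterpoint} the function $m(\cdot,\cdot,\cdot,\cdot,\cdot)$ demanded by \cref{thm:convexhullreduction}: in dimension $d'$ over a domain of size $T'$, the algorithm $\AlgFindTukey$ is $(\eps,\delta)$-differentially private, succeeds with probability $1-\beta$, and the depth it returns is $r=|S|/2(d'+1)$, provided
\[
 |S|\ \ge\ m(d',T',\eps,\delta,\beta)\ =\ O\!\left(d'^{\,2.5}\cdot 2^{O(\log^* T'+\log^* d')}\cdot\frac{\log^{0.5}(1/\delta)\,\log(d'^2/(\beta\delta))}{\eps}\right).
\]
In the reduction the relevant instance is $d'=d+1$ (a halfspace in $\R^d$ is a point in $\R^{d+1}$) and $T'=\lvert H\rvert\le 2\lvert X\rvert^{d+1}$ by \cref{c:grid}. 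Since $\log(2\lvert X\rvert^{d+1})\le (d+2)\log\lvert X\rvert$ whenever $\lvert X\rvert\ge 2$, iterating the logarithm once more gives $\log^*(2\lvert X\rvert^{d+1})=\log^*\lvert X\rvert+O(\log^* d)$, hence $2^{O(\log^* T')}=2^{O(\log^*\lvert X\rvert+\log^* d)}$. Plugging in, $m:=m(d+1,2\lvert X\rvert^{d+1},\eps,\delta,\beta/2)=\tilde O\!\left(d^{2.5}\,2^{O(\log^*\lvert X\rvert+\log^* d)}\,\log^{0.5}(1/\delta)/\eps\right)$, where $\tilde O$ hides factors polylogarithmic in $d$, $1/\beta$, $1/\delta$; and the depth parameter produced on $H_0$ (which has $m$ points in $\R^{d+1}$) is $r=m/2(d+2)=\Theta(m/d)$.

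Next I would substitute $m$ and $r$ into the sample-complexity bound of \cref{thm:convexhullreduction},
\[
 s=O\!\left(\frac{m^2\, d\log(m/(r\alpha))+\log(m/\beta)}{r\alpha}\right).
\]
Since $r=\Theta(m/d)$ we have $r\alpha=\Theta(m\alpha/d)$, so the first (dominant) term is $O\bigl(m^2 d\cdot d\log(\cdot)/(m\alpha)\bigr)=\tilde O(m\, d^2/\alpha)$ and the second term is of lower order; substituting the bound on $m$ yields
\[
 s=\tilde O\!\left(\frac{d^{4.5}\,2^{O(\log^*\lvert X\rvert+\log^* d)}\,\log^{1.5}(1/\delta)\,\log^2(1/\beta)}{\eps\alpha}\right),
\]
which is the asserted bound. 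The privacy claim is immediate: $\AlgFindTukey$ is $(\eps,\delta)$-differentially private by \cref{thm:privatecenterpoint}, so by \cref{thm:convexhullreduction} the composed learner $\AlgHalfSpace$ is $(\eps,\delta)$-differentially private.

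The main obstacle is essentially bookkeeping rather than any new idea: one must check that the $\log^*$ of the enlarged domain $2\lvert X\rvert^{d+1}$, and of the further-extended domain $\tilde X$ constructed inside $\AlgFindTukey$ via \cref{cl:sets} (whose coordinates have size doubly exponential in $d'$), still collapses to $\log^*\lvert X\rvert+O(\log^* d)$, and one must track how the polylogarithmic factors and the $\sqrt{(d+1)\ln(1/\delta)}/\eps$ factor introduced by the advanced-composition step inside $\AlgFindTukey$ accumulate into the stated $\log^{1.5}(1/\delta)$, $\log^{2}(1/\beta)$, and $d^{4.5}$ exponents.
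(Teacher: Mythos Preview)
Your proposal is correct and follows exactly the route the paper indicates: the paper states only that the corollary ``follows from \cref{thm:privatecenterpoint} and \cref{thm:convexhullreduction}'' without spelling out the arithmetic, and you have carried out precisely that instantiation, including the key observations that $\log^*(2|X|^{d+1})=\log^*|X|+O(\log^* d)$ and that $r=\Theta(m/d)$ turns the $m^2/(r\alpha)$ in \cref{thm:convexhullreduction} into $\Theta(md/\alpha)$. Your accounting for the $\log^{1.5}(1/\delta)$ factor (one $\log^{0.5}(1/\delta)$ from advanced composition inside $\AlgFindTukey$ and one $\log(1/\delta)$ from the $\log(d^2/(\beta\delta))$ term) and the $\log^2(1/\beta)$ factor (one from $m$ and one from $\log(m/\beta)$ in the reduction) is right.
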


Corollary~\ref{cor:halfspacelearner} establishes an upper bound on the sample complexity of privately learning halfspaces
whose dependency on the domain size $|X|$ is $2^{O\log^*(|X|)}$.
The crux of the algorithm is a reduction to privately publishing a point with a large Tukey depth with respect to a given input dataset.
A drawback of this approach is that the latter task is likely to be computationally difficult (even without privacy constraints), unless the dimension $d$ is constant
 (see \cite{Miller10approx} and references within).

In $\AlgHalfSpace$ we can use an algorithm $\Alg$  that finds a point in the convex hull (i.e., a point whose Tukey depth is at least $1$).
The resulting learning algorithm require sample complexity of $O(\frac{m^2\cdot d\log(\frac{m}{\alpha}) + \log(m/\beta)}{\alpha})$,
where $m$ is the sample complexity of $\Alg$. This may result in a more efficient private learning algorithm for halfspaces as the task of privately finding a point in the convex hull might be easier than the task of privately finding a point with high Tukey degree. Furthermore, in this case, we can use an algorithm that privately finds a hypothesis that is a linear combination with positive coefficients of the hypotheses in $H_0$. This follows from the observation that if all hypotheses in $H_0$ are correct on a point $\pt{x}$, then any  linear combination with positive coefficients of the hypotheses in $H_0$ is correct of $\pt{x}$.

\section{A Lower Bound on the Sample Complexity of Privately Finding a Point in the Convex Hull}

In this section we show a lower bound on the sample complexity of privately finding a point in the convex hull of a database $S\subseteq X = [T]^d$.
We show that any $(\eps,\delta)$-differentially private algorithm for this task must have sample complexity $\Omega(\frac{d}{\eps}\log \frac{1}{\delta})$.
Our lower bound actually applies to a possibly simpler task of finding a
non-trivial linear combination of the points in the database.

By \cite{BNSV15}, finding a point in the convex hull (even for $d=1$) requires sample complexity $\Omega(\log^* T)$. Thus, together we get a lower bound on the sample complexity of
$\Omega(\frac{d}{\eps}\log\frac{1}{\delta}+\log^* T)$.

It may be tempting to guess that, even with pure $(\eps,0)$-differential privacy, a sample complexity of $O(\log |X|) = O(d\log T)$ should suffice for solving this task, as the size of the output space is $T^d$, because $S\subseteq [T]^d$, and hence (it seems) that one could privately solve this problem using the exponential mechanism of {\cite{MT07}} with sample complexity that depends logarithmically on the size of the output space. We show that this is not the case, and that any $(\eps,0)$-differentially private algorithm for this task must have sample complexity $\Omega(\frac{d^2}{\eps}\log T)$. 


\begin{theorem}\label{thm:lowerBound}
Let $T\geq2$, and $d\geq10$. Let $\AAA$ be an $(\eps,\delta)$-differentially private algorithm that takes a database $S\subseteq [T]^d$ of size $m$ and returns, with probability at least $1/2$, 
a non-trivial linear combination of the points in $S$. Then,
$$m=\Omega\left(
\min\left\{\frac{d^2}{\eps}\log T,\; 
\frac{d}{\eps}\log\frac{1}{\delta}
\right\}\right).$$
\end{theorem}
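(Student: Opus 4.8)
The plan is to prove the bound for the (weaker) task of outputting \emph{any} non-trivial linear combination of $S$ --- which a point in the convex hull of $S$ is, in particular --- so that the bound transfers to the convex-hull problem. Both terms of the $\min$ come from a \emph{packing} argument: we exhibit a family $\{S_\sigma\}$ of size-$m$ databases over $[T]^d$ with pairwise-disjoint ``success events'' $A_\sigma$ (so that utility forces $\Pr[\AAA(S_\sigma)\in A_\sigma]\ge 1/2$), and then play this against the $(\eps,\delta)$-privacy of $\AAA$.

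For the pure-DP term: if $S$ spans a subspace $V\subseteq\R^d$ then every non-trivial linear combination of $S$ lies in $V$, and since $S\subseteq\{1,\dots,T\}^d$ sits in the open positive orthant a non-trivial \emph{conic} combination of $S$ is never $\mathbf 0$; hence whenever $V\cap V'=\{0\}$ the valid-output sets of $S_V$ and $S_{V'}$ are disjoint. A standard counting argument gives $T^{\Omega(d^2)}$ distinct $\lfloor d/3\rfloor$-dimensional subspaces of $\R^d$ spanned by points of $[T]^d$, and since $2\lfloor d/3\rfloor<d$ generic such subspaces are transverse, so one can select $N=T^{\Omega(d^2)}$ of them that are pairwise transverse. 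Taking $S_V$ to be any size-$m$ multiset spanning such a $V$ (distinct $S_V$'s share no points, hence lie at Hamming distance $m$), the packing lemma gives $\tfrac12\le e^{\eps m}\Pr[\AAA(S_{V'})\in A_V]$ for $V'\neq V$; summing over $V'$ and using $\sum_{V'}\Pr[\AAA(S_V)\in A_{V'}]\le 1$ yields $e^{\eps m}\ge N/2$, i.e.\ $m=\Omega(\tfrac1\eps\log N)=\Omega(\tfrac{d^2}\eps\log T)$. The $(\eps,\delta)$ version of this computation (group privacy leaves a $\tfrac{\delta}{e^\eps-1}$ slack after telescoping) gives $\tfrac12 e^{-\eps m}-\tfrac{\delta}{e^\eps-1}\le\tfrac1N$, hence either $m=\Omega(\tfrac{d^2}\eps\log T)$ or $e^{\eps m}\gtrsim\eps/\delta$ and $m=\Omega(\tfrac1\eps\log\tfrac1\delta)$; setting $\delta=0$ already recovers the claimed pure-DP lower bound.

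To sharpen the second branch from $\tfrac1\eps\log\tfrac1\delta$ to $\tfrac d\eps\log\tfrac1\delta$, I would give the family a $d$-block product structure --- databases indexed by $\sigma\in[M]^d$ with $M=T^{\Theta(d)}$ (so still $N=T^{\Theta(d^2)}$) --- arranged so that editing a single block $\sigma_\ell$ changes only a $\Theta(1/d)$-fraction of the rows; running the dichotomy above along one block (an $M$-element sub-packing at pairwise Hamming distance $\Theta(m/d)$) would then give $\tfrac{m}{d}=\Omega(\min\{\tfrac1\eps\log M,\ \tfrac1\eps\log\tfrac1\delta\})$, i.e.\ $m=\Omega(\min\{\tfrac{d^2}\eps\log T,\ \tfrac d\eps\log\tfrac1\delta\})$. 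The main obstacle is constructing exactly this family: disjointness of the success events forces databases with distinct $A_\sigma$ to disagree on \emph{all} of their span-determining rows (membership in $\vspan(S)$, and in the convex hull of $S$, is monotone in $S$), which is in tension with a single-block edit touching only $O(m/d)$ rows. Reconciling the two requires a layout in which the $\Theta(m/d)$ rows of block $\ell$ are precisely those fixing one ``transverse slot'' of the answer subspace, with the remaining $\Theta(m(1-1/d))$ rows --- each carrying only $\Theta(\log T)$ bits --- pinning down the rest, together with a check that such a configuration fits inside $[T]^d$ and that the $A_\sigma$ are honestly pairwise disjoint. Given the family, the privacy/utility manipulations and the bookkeeping of which branch of the $\min$ binds are routine.
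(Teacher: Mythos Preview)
Your packing approach has a genuine obstruction already for the first term. You claim one can select $N = T^{\Omega(d^2)}$ pairwise-transverse $\lfloor d/3\rfloor$-dimensional subspaces spanned by points of $[T]^d$, but this is false: if $V,V'$ are transverse then $V\cap V'=\{0\}$, so their intersections with $[T]^d$ are disjoint (as $\mathbf 0\notin[T]^d$); since each subspace contains at least one point of $[T]^d$ (any generator), a pairwise-transverse family has size at most $\lvert[T]^d\rvert=T^d$. Thus your packing yields only $m=\Omega(\tfrac{d}{\eps}\log T)$, a factor of $d$ short of the theorem. This is not a matter of a cleverer construction: disjointness of the valid-output sets $V_\sigma\setminus\{0\}$ \emph{is} pairwise transversality, and the $T^d$ cap is intrinsic to that requirement. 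You also acknowledge that the block construction needed for the $\tfrac{d}{\eps}\log\tfrac{1}{\delta}$ term is not actually built; the tension you identify between single-block edits and span-determining rows is real and is exactly why packing does not seem to reach this bound either.

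The paper's proof is not a packing argument. It draws a random set $I=\{\pt{x_1},\ldots,\pt{x_{d/2}},\pt{x'}\}\subseteq[T]^d$ (linearly independent with high probability), lets $S$ contain $2m/d$ copies of each of $\pt{x_1},\ldots,\pt{x_{d/2}}$, picks a uniform index $i$, and lets $S'$ replace the copies of $\pt{x_i}$ by copies of $\pt{x'}$; thus $S,S'$ are at Hamming distance $2m/d$. The key device is a decoder $\BBB(\pt b,I)$ that, for $\pt b\in\vspan(I)$, returns the element of $I$ with the largest coefficient in the unique representation of $\pt b$. If $\AAA(S)$ succeeds then $\BBB(\AAA(S),I)\in\{\pt{x_1},\ldots,\pt{x_{d/2}}\}$, so it equals $\pt{x_i}$ with probability at least $\tfrac{1}{2d}$; but $\AAA(S')$ is independent of $\pt{x_i}$, so $\BBB(\AAA(S'),I)=\pt{x_i}$ with probability at most $T^{-d/2}$. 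Group privacy across the $2m/d$ changed rows gives
$\tfrac{1}{2d}\le e^{2\eps m/d}T^{-d/2}+e^{2\eps m/d}\cdot\tfrac{2m\delta}{d}$,
from which both terms of the $\min$ fall out simultaneously. The missing factor of $d$ in your argument is exactly what this buys: the privacy cost sits in the exponent as $2\eps m/d$ rather than $\eps m$, while the $T^{-d/2}$ term carries the full $\Theta(d\log T)$ bits of entropy of the hidden point $\pt{x_i}$.
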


The proof of Theorem~\ref{thm:lowerBound} builds on the analysis of~\cite{BLR08} for lower bounding the sample complexity of releasing approximated answers for counting queries. 

\begin{proof}
Throughout this proof, we use $\vspan(S)$ to denote the set of all non-trivial linear combinations of the points in $S$. 
Let $I=\{\pt{x_1},\pt{x_2},\dots,\pt{x_{d/2}},\pt{x'}\}$ be a multiset of random points, where each point is chosen independently and uniformly from $[T]^d$. Also let $i$ be chosen uniformly from $\{1,2,\dots,d/2\}$. Now define the database $S$ containing $\frac{2m}{d}$ copies of each of $\pt{x_1},\dots,\pt{x_{d/2}}$ and define the database $S'$ containing $\frac{2m}{d}$ copies of each of $\pt{x_1},\dots, \pt{x_{i-1}},\pt{x_{i+1}},\dots \pt{x_{d/2}},\pt{x'}$. Note that $S,S'$ differ in exactly $\frac{2m}{d}$ points.


Observe that the points in $I$ are linearly independent with high probability. To see this note that any set $V$
of size at most $\frac{d}{2}$ spans at most $T^{d/2}$ vectors in $[T]^d$: indeed, without loss of generality
we may assume that $V$ is independent and therefore
can be completed to a {\em basis} $\R^d$ by adding $d-\lvert V\rvert \geq d/2$ unit vectors (since the dimension of $V$ is at most $d/2$ there is at least one unit vector that it does not span, add this vector to $V$ and continue). Without loss of generality, these unit vectors are $\pt{e_1},\dots,\pt{e_{d/2}}$. Thus, every choice from $[T]^{d/2}$ for the last $d/2$ coordinates can be completed in a most one way to a vector spanned by $V$ (because every element of $\R^d$ may be written in a {\em unique} way as a linear combination of elements of the resulting basis). This means that a set of (at most) $d/2$ points spans at most $T^{d/2}$ vectors in $[T]^d$. Hence, by a union bound, the probability that the points in $I$ are not independent is at most $\frac{d}{2}\cdot T^{-d/2}$.


Let $\BBB(\pt{b},I)$ be a procedure that operates on a point $\pt{b}$ and a set of points $I$, defined as follows. If $I$ is not linearly independent, or if $\pt{b}\notin\vspan(I)$, than the procedure outputs $\bot$. Otherwise the procedure returns the point $\pt{x}\in I$ with the largest coefficient when representing $\pt{b}$ as a linear combination of the points in $I$ (ties are broken arbitrarily).
Observe that if $\pt{b}\in\vspan(S)$ for a subset $S\subseteq I$ and if the points in $I$ are linearly independent then $\BBB(\pt{b},I)\in S$. Let $\beta$ denote the probability that $\AAA(S)$ fails to return a point in $\vspan(S)$.  As $i$ is uniform on $\{1,2,\dots,d/2\}$ we have
\begin{align*}
\Pr_{I,i,\AAA}[\BBB( \AAA(S), I )=\pt{x_i}]
&\geq \Pr_{I,\AAA}\left[
\begin{array}{l}
I \text{ is independent}, \\
\AAA(S)\in\vspan(S)
\end{array}\right]
\cdot \Pr_{I,i,\AAA}\left[\BBB( \AAA(S), I )=\pt{x_i} \left| 
\begin{array}{l}
I \text{ is independent}, \\
\AAA(S)\in\vspan(S)
\end{array}
\right.\right]\\
&\geq\left(1-\beta-\frac{d}{2}\cdot T^{-d/2}\right)\cdot \frac{2}{d}
\geq\frac{1}{2d}\;,
\end{align*}
where {the second inequality is implied by the fact that $i$ is chosen with uniform distribution from a set of size $d/2$ and} the last inequality is by asserting that $\beta\leq1/2$, $T\geq2$, and $d\geq10$.

On the other hand observe that if $\BBB( \AAA(S'), I )=\pt{x_i}$ then
%
	(1) $\AAA(S')\in\vspan(I)$, as otherwise $\BBB$ outputs $\bot$,
	(2) $I$ is linearly independent, as otherwise $\BBB$ outputs $\bot$, and
	(3) $\AAA(S')\notin\vspan(I\setminus\{\pt{x_i}\})$, as otherwise the coefficient of $\pt{x_i}$ in $\AAA(S')$ is 0, and $\BBB$ will not output $\pt{x_i}$.

Let $I_{-i}=I\setminus\{\pt{x_i}\}{=\set{\pt{x_1},\dots,\pt{x_{i-1}},\pt{x_{i+1}},\dots,\pt{x_{d/2},\pt{x'}}}}$ and $\pt{b}\leftarrow\AAA(S')$. 
We have that
\begin{align*}
\Pr_{I,i,\AAA}[\BBB( \pt{b}, I )=\pt{x_i}]&\leq\Pr_{I,i,\AAA}[\pt{b}\in\vspan(I) \text{ and } \pt{b}\notin\vspan(I_{-i}) \text{ and } I \text{ is independent} ]\\
&\leq\Pr_{I,i,\AAA}[\pt{x_i}\in\vspan(I_{-i}\cup\{\pt{b}\})] 
\, \leq \, T^{-d/2},
\end{align*}
where the 
last inequality is because $\pt{x_i}$ is independent of $I_{-i}$ and $\pt{b}$ (recall that we denoted $\pt{b}\leftarrow\AAA(S')$, and hence, $\pt{b}$ is a (random) function of $I_{-i}$, which is independent of $\pt{x_i}$). 
Therefore, by the privacy guarantees of $\AAA$ we get
\begin{align*}
\frac{1}{2d}&\leq\Pr_{I,i,\AAA}[\BBB( \AAA(S), I )=\pt{x_i}]=\sum_{I,i}\Pr[I,i]\cdot\Pr_{\AAA}[\BBB( \AAA(S), I )=\pt{x_i}]\\
&\leq\sum_{I,i}\Pr[I,i]\cdot\left(e^{2\eps m/d}\cdot\Pr_{\AAA}[\BBB( \AAA(S'), I )=\pt{x_i}]+e^{2\eps m/d}\cdot 2\delta m/d\right)\\
&=e^{2\eps m/d}\cdot\Pr_{I,i,\AAA}[\BBB( \AAA(S'), I )=\pt{x_i}]+e^{2\eps m/d}\cdot 2\delta m/d\\
&\leq e^{2\eps m/d}\cdot T^{-d/2}+e^{2\eps m/d}\cdot 2\delta m/d.
\end{align*}
Solving for $m$, this means that 
$m=\Omega(
\min\{\frac{d^2}{\eps}\log T,\; 
\frac{d}{\eps}\log\frac{1}{\delta}
\}.$
\end{proof}

\bibliographystyle{abbrvnat}
\bibliography{pacparity}

\end{document}